\title{Computable Lipschitz bounds for deep neural networks}
\author{Moreno Pintore\thanks{MEGAVOLT team, INRIA, F-75013 Paris, France -- SCAI, Sorbonne Universit\'e, F-75005 Paris, France. (moreno.pintore@inria.fr).}
\and Bruno Despr\'es\thanks{Sorbonne Universit\'e, Universit\'e Paris Cit\'e, CNRS, INRIA, Laboratoire Jacques-Louis Lions, LJLL, F-75005 Paris, France,
  (bruno.despres@sorbonne-universite.fr).}
}
\begin{document}

\maketitle

\begin{abstract}
Deriving sharp and computable upper bounds of the Lipschitz constant of deep neural networks is crucial  to formally guarantee the robustness of neural-network based
 models. We analyse three existing upper bounds written for the $l^2$ norm. We  highlight the importance of working with  the $l^1$ and $l^\infty$ norms and we propose two novel bounds  for both feed-forward fully-connected neural networks and convolutional neural networks. We treat the technical difficulties related to 
 convolutional neural networks with two different methods, called explicit and implicit. Several numerical tests empirically confirm the theoretical results, help to quantify  the relationship between the presented bounds
and establish the better accuracy of the new bounds. Four numerical tests  are studied: two where the output is derived from an
analytical  closed form are  proposed;  another one  with random matrices; and the last one for convolutional neural networks trained on the MNIST dataset.
We observe that one of our bound is  optimal in the sense that it is exact for the first  test with the simplest analytical form and it is better than other bounds for the other tests.
\end{abstract}

\begin{keywords}
Lipschitz constant, deep neural networks, upper bounds, robustness.
\end{keywords}

\begin{MSCcodes}
26A16, 68Q17, 68T99, 92B20.
\end{MSCcodes}




\section{Introduction}

Deep neural networks are more and more common in most scientific applications, even though they are still unstable in presence of specific small input perturbations. The most common examples of such instabilities are the adversarial attacks in the context of image classification  \cite{madry2018towards}. 
We also remark that, for engineering applications, the stability of the deep neural networks approximating physical functions with respect to small perturbations has to be controlled and quantified. A recent study in this direction is \cite{bouchi}.
That is why it is of supreme importance for the development of the discipline to establish methods which  qualify and quantify  the stability of functions or operators represented by deep neural networks. The interest in the stability of the neural networks is also partially inspired by some of the previous works of the manuscript authours, where the theory of deep neural networks is analyzed from different perspectives (see \cite{berrone2022solving, berrone2022variational, desp:book}).

In \cite{szegedy2013intriguing},  Szegedy-Goodfellow-et-al analyse the relationship between the Lipschitz constant of deep neural networks and their stability properties \cite{franceschi2018robustness}, proposing an upper bound obtained by multiplying the Lipschitz constants of all the layers. This  bound, denoted by $K_*$ in the following discussion, is very pessimistic 
and so it is meaningless when the number of layers grows (the deep regime). Nevertheless, $K_*$ has been extensively utilized during the training or in the architecture of deep neural networks to improve the stability of the model \cite{couellan2021coupling,liu2022learning,gouk2021regularisation}. Sharper estimates have been obtained in  Combettes-Pesquet \cite{combettes2020lipschitz} and in Scaman-Virmaux \cite{virmaux2019lipschitz}. In the current manuscript, we are interested in further extending these two specific works  by presenting new certified Lipschitz bounds in $l^p$  norms ($1\leq p \leq \infty$) and their extensions to convolutional  neural networks. We are also interested in  measuring and  comparing 
the efficiency of all these estimates on simple tests.
 
Even though it is not the subject of the current paper, we highlight that there exist some estimates of the Lipschitz constant based on semidefinite minimization problems \cite{fazlyab2019efficient} or on polynomial approximation approaches \cite{latorre2020lipschitz,chen2020semialgebraic} (see \cite{gupta2022multivariate} for a compact overview of both approaches), as well as estimates of the local Lipschitz constant \cite{weng2018towards,zhang2018efficient,lee2020lipschitz}. We also highlight that all these bounds can be employed in theoretical analyses such as \cite{katselis2021concentration,huang2019stochastic} to better characterize the neural network output.

In our opinion, our  main results are as follows.
\begin{itemize}
\item We provide theoretical reasons why the $l^1$ norm and the $l^\infty$ norm are in some cases preferable in evaluating
the Lipschitz bounds, rather  than  $l^2$-based norms which are more standard.
\item We define new bounds $K_3$ and $K_4$. The bound $K_4$  is sharper than the $K_1$ constant (which comes form the Combettes-Pesquet work \cite{combettes2020lipschitz})
in the  $l^1$  and  $l^\infty$ norms.
\item We extend in two different ways the discussed bounds to the case of convolutional neural networks.
\item We test our bounds on four numerical tests of different origins which confirm the better accuracy provided by our new bound $K_4$.
In particular, we observe a surprising behavior for our second test problem for which we know the exact value of the Lipschitz constant.
In this case, the numerical bound $K_4$ is exact as shown in Table \ref{tab:lips_for_x2}.
\end{itemize}

The paper is structured as follows. In Section \ref{sec:fcnn}, we focus on fully-connected feed-forward neural networks. In particular, for this kind of neural networks, we summarize and prove existing Lipschitz upper bounds in Section \ref{sec:prev_works_fcnn} and we prove new bounds in Section \ref{sec:new_bounds}. Convolutional neural networks are analysed in Section \ref{sec:conv}, where two approaches to bound the Lipschitz constants of networks including max-pooling layers are discussed in Sections \ref{sec:expl_app} and \ref{sec:impl_app}. Numerical results in strong agreement with the theoretical ones are provided in Section \ref{sec:num_res}, where we consider a feed-forward fully-connected neural network with random weights 
\cite{combettes2020lipschitz}, with weights chosen to efficiently approximate polynomial functions and convolutional neural networks. 
 In Sections \ref{sec:x2} and \ref{sec:xy} we propose two benchmarks where the output of a deep neural network is known in a simple closed form, because we believe that this is crucial in order to improve the theoretical understanding of deep neural networks with an arbitrary number of layers but in simplified scenarios. Section \ref{sec:conclusion} includes conclusion remarks and future perspectives.
\smallskip

{\bf Notation} For  any vector  $x\in \mathbb R^a$ with arbitrary size $a\geq 1$, we will denote by $\Vert x\Vert_{l^p(\mathbb R^a)}=\left(\sum_{ i =1}^a|x_i|^p\right)^{1/p}$ its $l^p$ norm.
Popular norms \cite{hingham} are the $l^1$ norm $\|x\|_{l^1(\mathbb R^a)}=\sum_{ i =1}^a|x_i|$, the $l^2$ norm $\|x\|_{l^2(\mathbb R^a)}^2=\sum_{i=1}^a x_i^2$ and the $l^\infty$ norm $\|x\|_{l^\infty(\mathbb R^a)}= \max_{i=1}^a
|x_i|$.
It yields the induced norm for matrices $M\in \mathcal M_{ab}(\mathbb R)=\mathbb R^{a\times b}$ where the output size $a\geq 1$ and the input size $b\geq 1$ are arbitrary. The induced norm is 
\begin{equation} \label{eq:valval}
\|M\|_{l^p(\mathcal M_{ab}(\mathbb R) )}= \max _{x\neq 0} \frac{ \| Mx\|_{l^p(\mathbb R^a)}}{\|x\|_{l^p(\mathbb R^b)}}.
\end{equation}
It will be denoted by the simpler form 
 $\|M\|=\|M\|_{l^p(\mathcal M_{ab}(\mathbb R) )}$ if there is no ambiguity.

\section{Fully-connected feed-forward neural networks}\label{sec:fcnn}

A generic fully-connected feed-forward neural network function $f$ \cite{goodi,desp:book} is  written under the form  
\begin{equation}\label{eq:fcnn_def}
f = f_\ell \circ S_\ell \circ f_{\ell-1} \circ S_{\ell-1} \circ\dots \circ S_1 \circ f_0.
\end{equation}
 Feed-forward neural networks are mostly used for regression purposes, which means that $f$ is usually close to a smooth objective function \cite{desp:book}.
Here  the functions $f_i$, $i=1,\dots,\ell$,  denote affine functions with varying input and output dimensions $(a_0,a_1, \dots, a_{\ell+1})\in \mathbb N^{\ell+2}$ with $a_i>0$, for   $i=1,\dots,\ell$.
More precisely, $f_i(x_i) = W_ix_i+b_i\in \mathbb R^{a_{i+1}}$ for all $x_i\in \mathbb R^{a_i}$. The parameters of the affine function $f_i$ are  called 
the weights $W_i \in \mathcal M_{a_{i+1}, a_i}(\mathbb R)$ and the biases  $b_i\in \mathbb R^{a_{i+1}}$.
The intermediate functions  $S_i$, $i=1,\dots,\ell$,  are nonlinear activation functions \cite{sharma2017activation} such that $0\le S_i'(x)\le 1$ a.e.  $ x\in\R$. The ReLU function $R(x)=\max(0,x)$ is a  popular activation function
that we will use in our numerical tests. The activation functions $S_i$ are all  Lipschitz by assumption. Note that in practice, that is in the softwares that are used in machine learning, the derivative of all activation functions are defined everywhere. For example a standard choice is $R'(0)=0$ (see \cite{tensorflow2015-whitepaper, pytorch, jax2018github}).
 Any activation function is generalized component wise to  vectors of arbitrary dimension. The index $i$ in $S_i$ explains that one can change the activation functions from one layer to the other.
 The number $a_i$ is referred to as the number of neurons of the $i$-th layer.
The integer $\ell$ is   the number of layers of the neural network function  (\ref{eq:fcnn_def}). 

The regularity of  $f$ is
 $$
 f\in C^0(\mathbb R^{a_0})^{a_{\ell+1}}\bigcap \mbox{Lip}(\mathbb R^{a_0})^{a_{\ell+1}}
 $$ 
 since it is the composition of continuous and globally Lipschitz functions.
 The Rademacher's Theorem states that $f$ is differentiable almost everywhere.
The Lipschitz constant of the function $f$  that we consider is 
$$
L=
\sup_{x\in \mathbb R^{a_0}} \| \nabla f(x)\|_{l^p \left(\mathcal M_{a_{\ell+1},a_0}(\mathbb R)  \right)}
=
\left( \| \nabla f\|_{l^p \left(\mathcal M_{a_{\ell+1},a_0}(\mathbb R) \right)} \right) _{L^\infty(\mathbb R^{a_0})}.
$$ 
Let us denote by $D_r = D_r(x)$ the square diagonal matrix 
$
D_r = S_r'\circ f_{r-1}\circ S_{r-1}\circ\dots\circ S_1\circ f_0(x)$. 
With our assumptions,  the diagonal  coefficients  of $D_r$ are between 0 and 1. We define the set 
\begin{equation} \label{eq:Dr}
\mathcal D_r=\left\{
D_r\in \mathcal M_{a_r,a_r} (\mathbb R)\mid D_r \mbox{ is a diagonal matrix with coefficients between 0 and 1} 
\right\}.
\end{equation}
One can thus write $D_r\in \mathcal D_r$.
The chain rule shows that the gradient of $f$ is the  product of matrices
\begin{equation} \label{eq:gradf}
\nabla f(x) = W_\ell D_\ell (x)W_{\ell-1}D_{\ell-1}(x)\dots D_1 (x)W_0,
\end{equation}
where only the matrices $D_r$, $r=1,\dots,\ell$, depend on $x$.

For convenience, we note $\mathcal D= \mathcal D_\ell \times \mathcal D_{\ell-1} \times \dots \times \mathcal D_1$,
$D=D(x)=\left(D_\ell (x), D_{\ell-1}(x), \dots, D_1 (x) \right)\in \mathcal D$ and $W=(W_\ell, W_{\ell-1},\dots, W_0)$.
As in \cite{szegedy2013intriguing}, we introduce the quantity 
\begin{equation}\label{eq:fcnn_starting_bound}
K = K(W) = \max_{D\in{\cal{D}}} \Vert W_\ell D_\ell W_{\ell-1}D_{\ell-1}\dots D_1 W_0 \Vert ,
\end{equation}
so that one clearly has
$$
L\le K.
$$
\begin{remark}[Complexity] \label{rem:1}
Set  
$
\mbox{Ext} \left( \mathcal D_r \right)=\left\{
D_r\in \mathcal D_r \mid  \mbox{ the diagonal  coefficients are  equal to  0 or  1} 
\right\}
$
which is the finite ensemble of   exterior points of the convex set $ \mathcal D_r$.
In (\ref{eq:fcnn_starting_bound}) one notices that $K$ is a convex function of $D_r$, so the maximal value $K$ is  reached at the external points.
Therefore one also  has 
$$
K = \max_{D\in {\rm Ext} \left( {\cal{D}} \right)} \Vert W_\ell D_\ell W_{\ell-1}D_{\ell-1}\dots D_1 W_0 \Vert 
$$
where $ {\rm Ext} \left( {\cal{D}}\right)= \mbox{Ext} \left( \mathcal D_\ell \right)\times \dots \times \mbox{Ext} \left( \mathcal D_0 \right)$.
Since  $ {\rm Ext}({\mathcal D}) $ is finite, 
it is  possible to calculate $K$ by exhaustion of the possibilities. Nevertheless 
the number of cases is the number of corners of the hypercube $\mathcal D\in \mathbb R^{a_\ell+\dots + a_0}$.
So the complexity of this calculation is  $O\left( 2^{a_\ell+\dots + a_0}\right)$ where $a_\ell+\dots + a_0$ is the total number of neurons.
That is, the complexity of the calculation of $K$ is exponential with respect to the total number of neurons.
In many reasonable cases, this cost makes this direct  calculation of $K$ just impossible.
\end{remark}

Our goal in this work is to examine some bounds on $K$ which are computable and  sharp.
 This is what we call {\bf certified  bounds}.

\subsection{Previous works}\label{sec:prev_works_fcnn}

In this section we introduce some upper bounds for the Lipschitz constant of fully-connected feed-forward neural networks available in literature. 

\begin{definition}[Worst bound]\label{def:worst_bound}
$ K_\star:= \prod_{r=0}^\ell \Vert W_r\Vert$.
\end{definition}

The first bound 
$
 K\leq K_\star
 $
 is evident from the sub-multiplicativity of the norm \cite{combettes2020lipschitz,virmaux2019lipschitz,desp:book}.
However, it will be clear in the numerical Section  that this bound is not sharp and is actually the worst one. 

The next result is an excerpt  from \cite{combettes2020lipschitz}.
For convenience, we begin with a definition.

\begin{definition} \label{defi:wts}
Given $0\le s< t\le\ell+1$, define the matrices
$W_{(t,s)} = W_{t-1} W_{t-2} \dots W_{s+1} W_{s}$. 
\end{definition}

\begin{theorem}[Combettes-Pesquet bound]\label{def:kcp}
One has the bound $K\leq K_1$ where
\begin{equation}\label{eq:def_cp}
 K_{1} = \dfrac{1}{2^\ell} \sum_{1\le r_1< r_2<\dots< r_n\le\ell} \Vert W_{(\ell+1,r_{n})}\Vert \Vert W_{(r_{n},r_{n-1})}\Vert \dots
\Vert W_{(r_2,r_1)}\Vert   \Vert W_{(r_1,0)}\Vert
\end{equation}
\end{theorem}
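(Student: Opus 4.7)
The plan is to use the affine decomposition $D_r = \tfrac{1}{2}(I_{a_r} + A_r)$ with $A_r := 2D_r - I_{a_r}$, then expand the resulting product into $2^\ell$ terms indexed by subsets of $\{1,\dots,\ell\}$ and bound each term using submultiplicativity.

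First I would establish the preliminary bound $\|A_r\| \le 1$. Since $D_r \in \mathcal D_r$ has diagonal entries in $[0,1]$, the matrix $A_r$ is diagonal with entries in $[-1,1]$. Because the induced $l^p$ norm of a diagonal matrix equals the maximum absolute value of its diagonal entries (for every $1 \le p \le \infty$), this yields $\|A_r\| \le 1$. Substituting $D_r = \tfrac12(I + A_r)$ into the product appearing in \eqref{eq:fcnn_starting_bound} gives
\begin{equation*}
W_\ell D_\ell W_{\ell-1} D_{\ell-1} \cdots D_1 W_0 = \frac{1}{2^\ell}\, W_\ell (I + A_\ell)\, W_{\ell-1} (I + A_{\ell-1}) \cdots W_1 (I + A_1)\, W_0.
\end{equation*}

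Next I would expand the right-hand side. The product has $\ell$ binary factors $(I + A_r)$, so the expansion yields $2^\ell$ terms indexed by subsets $S = \{r_1 < r_2 < \dots < r_n\} \subseteq \{1,\dots,\ell\}$, where for each $r \in S$ one keeps the factor $A_r$ and otherwise one keeps $I$. Whenever $I$ is chosen, the two adjacent weight matrices multiply together; more precisely, the block of consecutive $I$-choices between two selected indices $r_k < r_{k+1}$ collapses exactly to $W_{(r_{k+1}, r_k)} = W_{r_{k+1}-1} \cdots W_{r_k}$ by Definition~\ref{defi:wts}. The terminal blocks give $W_{(\ell+1, r_n)}$ and $W_{(r_1, 0)}$, and the empty subset $S = \emptyset$ contributes the single term $W_{(\ell+1, 0)}$. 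Thus the term corresponding to $S$ equals
\begin{equation*}
W_{(\ell+1, r_n)}\, A_{r_n}\, W_{(r_n, r_{n-1})}\, A_{r_{n-1}} \cdots A_{r_1}\, W_{(r_1, 0)}.
\end{equation*}

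Finally I would apply the triangle inequality across the $2^\ell$ terms and submultiplicativity of the induced norm within each term, using $\|A_{r_j}\| \le 1$ to discard the $A$-factors, obtaining
\begin{equation*}
\|W_\ell D_\ell \cdots D_1 W_0\| \le \frac{1}{2^\ell} \sum_{1 \le r_1 < \cdots < r_n \le \ell} \|W_{(\ell+1,r_n)}\| \, \|W_{(r_n,r_{n-1})}\| \cdots \|W_{(r_1,0)}\|.
\end{equation*}
The right-hand side is independent of $D$, so taking the maximum over $D \in \mathcal D$ gives $K \le K_1$.

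The main difficulty is purely combinatorial bookkeeping: verifying that the $I$-collapses line up with the telescoping indices of Definition~\ref{defi:wts} so that the expansion reproduces \emph{exactly} the sum \eqref{eq:def_cp} without stray factors or missing endpoint cases. The key observation that makes everything work, and the only nontrivial analytic input, is that $A_r = 2D_r - I$ has induced norm at most one in every $l^p$; everything else is algebraic expansion.
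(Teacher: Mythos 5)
Your proposal is correct and follows essentially the same route as the paper: the substitution $D_r=\tfrac12(I+A_r)$ is exactly the paper's $D_r=\tfrac12(I_r+Z_r)$, followed by the same $2^\ell$-term expansion, triangle inequality, and submultiplicative bounding with $\Vert A_r\Vert\le 1$. Your explicit justification that a diagonal matrix with entries in $[-1,1]$ has induced $l^p$ norm at most one is a small addition the paper leaves implicit, but the argument is otherwise identical.
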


\begin{remark}
a)  Necessarily $n\le \ell$  in the formula (\ref{eq:def_cp}),  and b) the number of terms in the sum is $2^\ell$.
\end{remark}

\begin{remark}
We prove below in Proposition \ref{prop:k1_sharper_kstar} that  $K_1\leq K_\star$.
But before proving in the inequality, we stress that  there are two intuitive reasons to argue that $K_1$ could be much smaller than $K_\star$, that is $K_1\ll K_\star$,
in some cases.
The first one is that the decomposition for $n=\ell$ gives back one term which is equal to  $ \dfrac{1}{2^\ell} K_\star$. It results in a drastic reduction of the influence of the worst bound in the deep regime.
The second one is that for all $n<\ell$, there is a least one matrix  $W_{(r',r)}$ with $r'\geq r+2$. Then this term   is the product of at least two matrices, for which  which one can expect some cancellations
and thus a strong reduction in the numerical value of the norm of the product.
\end{remark}


\begin{proof}
The technical part of the proposed  proof is a simplification of the one in  \cite{combettes2020lipschitz}. We also show that the bound (\ref{eq:def_cp}) holds for all induced
 norms, while the seminal reference  \cite{combettes2020lipschitz} considers only the $l^2$ norm. 
 
 One expresses the matrices $D_r\in{\cal M}_{a_r,a_r}(\R)$ in \eqref{eq:fcnn_starting_bound} as 
 \begin{equation} \label{eq:dtoz}
 D_r=\dfrac12 ( I_r+Z_r)
 \end{equation}
  where $I_r\in{\cal M}_{a_r,a_r}(\R)$ is the identity matrix and $Z_r=2D_r -I_r$ is such that $\Vert Z_r\Vert\le1$. We denote by ${\cal Z}_r$ the set of matrices ${\cal Z}_r = \{Z_r| \exists D_r\in {\cal D}_r : Z_r=2D_r -I_r\}$. This is generalized as  ${\cal Z} = {\cal Z}_\ell \times\dots\times {\cal Z}_1$ and we  introduce the notation 
 $$
 Z=\left(Z_\ell , \dots, Z_1  \right)\in \mathcal Z.
 $$
 To explain the interest of the matrices $Z_r$, let us consider the following example  with 4 hidden layers
  \begin{equation} \label{eq:gradf1}
 f=f_4\circ S_4 \circ f_3 \circ S_3 \circ f_2 \circ S_2 \circ f_1 \circ S_1 \circ f_0.
 \end{equation}
 The gradient formula (\ref{eq:gradf}) becomes
 \begin{equation} \label{eq:gradf2}
 \nabla f=\frac1{2^4}\   W_4 (I+Z_4)  W_3 (I+Z_2) W_2 (I+Z_2) W_1 (I+Z_1) W_0
 \end{equation}
 where the identity matrices  have different sizes in the general case.
  Expansion of all terms  yields
  \begin{equation} \label{eq:gradf3}
 \nabla f =\frac1{2^4} \ (W_4W_3W_2W_1W_0+\dots + W_4Z_4W_3Z_3W_2Z_2W_1Z_1W_0)
  \end{equation}
 where in the parenthesis,  the first term is the product of all weight matrices, the last term is the product of all weight matrices combined with the $Z_r$,
 and  the middle  term  $\dots$ represents  all other possible chunks of weight matrices $W_r$ separated with matrices  $Z_r$.
 Then a triangular inequality yields the result because $\|Z_r\| =1$ for all $r$.
 
 The general case is treated as follows.
 \begin{equation} \label{eq:k3}
 \begin{aligned}
 K &= \max_{D\in{\cal{D}}} \Vert W_\ell D_\ell W_{\ell-1}D_{\ell-1}\dots D_1 W_0 \Vert \\
 &= \max_{Z\in{\cal{Z}}} \left\Vert W_\ell \left(\dfrac12 ( I_\ell+Z_\ell)\right) W_{\ell-1}\left(\dfrac12 ( I_{\ell-1}+Z_{\ell-1})\right)\dots \left(\dfrac12 ( I_1+Z_1)\right) W_0 \right\Vert\\
 &\leq \dfrac{1}{2^\ell} \max_{Z\in{\cal{Z}}}\sum_{1\le r_1< r_2<\dots< r_n\le\ell} \Vert W_{(\ell+1,r_{n})}Z_{r_n} W_{(r_{n},r_{n-1})} \dots
 W_{(r_2,r_1)}Z_{r_1} W_{(r_1,0)}\Vert\\
 &\le\dfrac{1}{2^\ell} \max_{Z\in{\cal{Z}}}\sum_{1\le r_1< r_2<\dots< r_n\le\ell} \Vert W_{(\ell+1,r_{n})}\Vert \Vert Z_{r_n}\Vert \Vert W_{(r_{n},r_{n-1})}\Vert \dots
\Vert W_{(r_2,r_1)}\Vert  \Vert Z_{r_1}\Vert \Vert W_{(r_1,0)}\Vert\\
&\le \dfrac{1}{2^\ell} \sum_{1\le r_1< r_2<\dots< r_n\le\ell} \Vert W_{(\ell+1,r_{n})}\Vert \Vert W_{(r_{n},r_{n-1})}\Vert \dots
\Vert W_{(r_2,r_1)}\Vert   \Vert W_{(r_1,0)}\Vert 
 = K_1,
\end{aligned}
\end{equation}
where  we have  removed the dependence on $Z$ by using the bound $\Vert Z_r\Vert \le 1$.
\end{proof}

{
\begin{proposition}\label{prop:k1_sharper_kstar}
One has the bound $K_1\le K_*$.
\end{proposition}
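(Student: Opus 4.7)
The plan is to exploit sub-multiplicativity of the induced norm together with the elementary combinatorial fact that the $K_1$ sum contains exactly $2^\ell$ terms (one per subset $\{r_1 < \dots < r_n\} \subseteq \{1,\dots,\ell\}$, including the empty set $n=0$).

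First I would rewrite a generic term of the sum. Fix indices $0 = r_0 < r_1 < r_2 < \dots < r_n < r_{n+1} = \ell+1$. By Definition \ref{defi:wts}, each block $W_{(r_{k+1},r_k)}$ is a product of the consecutive matrices $W_{r_{k+1}-1} W_{r_{k+1}-2} \cdots W_{r_k}$, so by sub-multiplicativity of the induced norm,
$$
\|W_{(r_{k+1},r_k)}\| \le \prod_{s=r_k}^{r_{k+1}-1} \|W_s\|.
$$
Taking the product over $k = 0,\dots,n$ and observing that the index ranges $[r_k, r_{k+1}-1]$ form a partition of $\{0,1,\dots,\ell\}$, one obtains
$$
\|W_{(\ell+1,r_n)}\| \|W_{(r_n,r_{n-1})}\| \cdots \|W_{(r_1,0)}\| \;\le\; \prod_{r=0}^{\ell} \|W_r\| \;=\; K_\star.
$$
Thus every term in the definition \eqref{eq:def_cp} of $K_1$ is bounded by $K_\star$.

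Second, I would count the terms. Each term corresponds to a choice of a (possibly empty) strictly increasing tuple $1 \le r_1 < \dots < r_n \le \ell$, i.e.\ a subset of $\{1,\dots,\ell\}$, so the total number of terms is $\sum_{n=0}^{\ell}\binom{\ell}{n} = 2^\ell$. Combining with the previous bound,
$$
K_1 \;\le\; \frac{1}{2^\ell} \cdot 2^\ell \cdot K_\star \;=\; K_\star,
$$
which is the claimed inequality. There is no real obstacle here; the only mildly subtle point is making sure that the case $n=0$ is included in the sum (contributing the single term $\|W_{(\ell+1,0)}\|$ which itself is bounded directly by $K_\star$) and that the index partition argument is stated cleanly, so I would double-check the indexing conventions from Definition \ref{defi:wts} before finalizing.
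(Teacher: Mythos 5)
Your proof is correct and follows essentially the same route as the paper's: bound each block $\Vert W_{(r_{k+1},r_k)}\Vert$ by sub-multiplicativity so that every term of the sum is at most $K_\star$, then use the fact that the sum has exactly $2^\ell$ terms to cancel the prefactor $1/2^\ell$. Your explicit remark that the index ranges partition $\{0,\dots,\ell\}$ and that the $n=0$ term is included is a slightly more careful phrasing of the same argument.
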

\begin{proof}
One has 
\begin{equation}\label{eq:wts_bound}
\Vert W_{(t,s)} \Vert = \Vert  W_{t-1} W_{t-2} \dots W_{s+1} W_{s}\Vert \le \Vert  W_{t-1}\Vert\Vert W_{t-2}\Vert \dots \Vert W_{s+1}\Vert\Vert W_{s}\Vert
\end{equation}
because of the norm submultiplicativity. 
We now bound all the norms involved in the definition of $K_1$ as in \ref{eq:wts_bound} obtaining
 \begin{equation*}
 \begin{aligned}
 K_1 &= \dfrac{1}{2^\ell} \sum_{1\le r_1< r_2<\dots< r_n\le\ell} \Vert W_{(\ell+1,r_{n})}\Vert \Vert W_{(r_{n},r_{n-1})}\Vert \dots
\Vert W_{(r_2,r_1)}\Vert   \Vert W_{(r_1,0)}\Vert \\
& \le  \dfrac{1}{2^\ell} \sum_{1\le r_1< r_2<\dots< r_n\le\ell} \Vert W_\ell\Vert \Vert W_{\ell-1}\Vert \dots \Vert W_1\Vert \Vert W_0\Vert 
=  \dfrac{1}{2^\ell} \sum_{1\le r_1< r_2<\dots< r_n\le\ell} K_*= K_*.
\end{aligned}
\end{equation*}
The last equality holds because the sum includes $2^\ell$ identical terms.
\end{proof}
}

The next result is an extension to generic $l^p$ matrix norms of the Virmaux-Scaman upper bound  \cite{virmaux2019lipschitz}.
One starts from the generic  decomposition of $W_i \in \mathcal M_{a_{i+1}, a_i}(\mathbb R)$ written as
$$
W_i = U_i\Sigma_i V_i^T.
$$ 
This decomposition is a singular value decomposition \cite{brunton2019singular} by choosing 
 $U_i \in \mathcal M_{a_{i+1}, a_{i+1}}(\mathbb R)$ and $V_i \in \mathcal M_{a_{i}, a_i}(\mathbb R)$ as orthogonal matrices, and by imposing that $\Sigma$ is a diagonal matrix with positive entries.
 Other choices are of course possible.

\begin{theorem}[Virmaux-Scaman bound]\label{virmaux2019lipschitz}
One has the bound $K\le K_2$ where
\begin{equation}\label{eq:seqlip_def}
K_2 = \prod_{i=0}^{\ell-1} \max_{D_{i+1}\in {\cal D}_{i+1}} \Vert \widetilde \Sigma_{i+1} V_{i+1}^T D_{i+1} U_i\widetilde \Sigma_i\Vert,
\end{equation}
where $\widetilde \Sigma_0=\Sigma_0 V_0^T$, $\widetilde \Sigma_\ell = U_\ell \Sigma_\ell$ and $ \widetilde \Sigma_i=\Sigma_i^{1/2}$ for all $i\notin\{0,\ell\}$. 

\end{theorem}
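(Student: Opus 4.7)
The plan is to insert the singular-value decompositions $W_i = U_i \Sigma_i V_i^T$ into the gradient formula \eqref{eq:gradf}, regroup the factors into overlapping blocks of the form $\widetilde\Sigma_{i+1} V_{i+1}^T D_{i+1} U_i \widetilde\Sigma_i$, and then apply submultiplicativity of the induced norm. The splitting $\Sigma_i = \Sigma_i^{1/2}\Sigma_i^{1/2} = \widetilde\Sigma_i \widetilde\Sigma_i$ for $i \notin \{0,\ell\}$ is what allows adjacent blocks to share a $\widetilde\Sigma_i$ factor.

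First I would substitute the SVDs into \eqref{eq:gradf} to get
\begin{equation*}
\nabla f(x) = U_\ell\Sigma_\ell V_\ell^T D_\ell\, U_{\ell-1}\Sigma_{\ell-1}V_{\ell-1}^T D_{\ell-1}\cdots D_1\, U_0\Sigma_0 V_0^T.
\end{equation*}
Next, for $i \notin \{0,\ell\}$ I would replace each central $\Sigma_i$ by $\widetilde\Sigma_i\widetilde\Sigma_i$, and for the boundary indices I would absorb $U_\ell\Sigma_\ell$ into $\widetilde\Sigma_\ell$ and $\Sigma_0 V_0^T$ into $\widetilde\Sigma_0$, consistently with the definitions given in the statement. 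After this regrouping the gradient factors exactly as
\begin{equation*}
\nabla f(x) = \prod_{i=\ell-1}^{0}\bigl(\widetilde\Sigma_{i+1} V_{i+1}^T D_{i+1} U_i \widetilde\Sigma_i\bigr),
\end{equation*}
the product being written from left to right in decreasing order of $i$. A short bookkeeping check confirms that the pairs $\widetilde\Sigma_i\widetilde\Sigma_i$ coming from adjacent blocks recombine into $\Sigma_i$, and that the leftmost and rightmost blocks reproduce $U_\ell\Sigma_\ell V_\ell^T = W_\ell$ and $U_0\Sigma_0 V_0^T = W_0$ after the absorption conventions.

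Then I would apply submultiplicativity of the induced $l^p$ norm to the telescoped product to get
\begin{equation*}
\|\nabla f(x)\| \le \prod_{i=0}^{\ell-1}\|\widetilde\Sigma_{i+1}V_{i+1}^T D_{i+1}(x) U_i \widetilde\Sigma_i\|.
\end{equation*}
Since each factor depends on a single $D_{i+1}(x) \in \mathcal D_{i+1}$ and the $D_{i+1}(x)$ vary independently in $\mathcal D_{i+1}$ as $x$ ranges over $\mathbb R^{a_0}$, I can upper bound each factor separately by its maximum over $\mathcal D_{i+1}$. This yields $\|\nabla f(x)\| \le K_2$ pointwise, and consequently $K = \max_{D\in\mathcal D}\|W_\ell D_\ell\cdots D_1 W_0\| \le K_2$.

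The step that requires care is not analytically hard but combinatorially delicate: keeping track of the bookkeeping for the half-power splitting, making sure that the interior factors $\widetilde\Sigma_i\widetilde\Sigma_i = \Sigma_i$ reassemble correctly while the boundary factors $\widetilde\Sigma_\ell$ and $\widetilde\Sigma_0$ are defined asymmetrically to carry the leftover $U_\ell$ and $V_0^T$. I would make this explicit on the four-layer example \eqref{eq:gradf1}--\eqref{eq:gradf2} before writing the general product, so that the reader can verify the regrouping visually. A secondary point to mention is that nothing in the argument is specific to $p=2$: the only properties used are the submultiplicativity of $\|\cdot\|$ and the independence of the diagonal matrices $D_r$, so the bound holds for any induced $l^p$ norm, which is the advertised extension of \cite{virmaux2019lipschitz}.
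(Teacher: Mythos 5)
Your proposal is correct and follows essentially the same route as the paper's proof: insert the SVDs into the product, split each interior $\Sigma_i$ as $\widetilde\Sigma_i\widetilde\Sigma_i$ with the asymmetric boundary conventions, regroup into the overlapping blocks $\widetilde\Sigma_{i+1}V_{i+1}^T D_{i+1} U_i\widetilde\Sigma_i$, and conclude by submultiplicativity and maximizing each factor separately. The only cosmetic difference is that the paper works directly from the definition of $K$ as a maximum over $\mathcal D$ rather than pointwise in $x$, which makes the final step an immediate consequence of bounding a max of a product by the product of maxes.
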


\begin{proof}
The theorem can be proved in the following way:
\begin{equation*}
\begin{aligned}
K &= \max_{D\in{\cal{D}}} \Vert \left(U_\ell\Sigma_\ell V_\ell^T\right) D_\ell \left(U_{\ell-1}\Sigma_{\ell-1} V_{\ell-1}^T\right)D_{\ell-1}\dots D_2 \left(U_1\Sigma_1 V_1^T\right) D_1 \left(U_0\Sigma_0 V_0^T\right) \Vert \\
&= \max_{D\in{\cal{D}}} \Vert \widetilde \Sigma_\ell V_\ell^T D_\ell U_{\ell-1}\widetilde \Sigma_{\ell-1}\widetilde \Sigma_{\ell-1} V_{\ell-1}^TD_{\ell-1}\dots D_2 U_1\widetilde\Sigma_1 \widetilde\Sigma_1V_1^T D_1 U_0\widetilde \Sigma_0  \Vert \\
&\le \prod_{i=0}^{\ell-1} \max_{D_{i+1}\in {\cal D}_{i+1}} \Vert \widetilde \Sigma_{i+1} V_{i+1}^T D_{i+1} U_i\widetilde \Sigma_i\Vert.
\end{aligned}
\end{equation*}
\end{proof}

The prescription from \cite{virmaux2019lipschitz} is to take 
 $\widetilde \Sigma_0=\Sigma_0$ and $\widetilde \Sigma_\ell = \Sigma_\ell$, when  the $l^2$ norm is used, because $V_0^T$ and $U_\ell$ are orthogonal matrices. Various similar bounds can be obtained with other matrix factorizations.

\begin{remark}[Complexity of the Virmaux-Scaman bound] \label{rem:scb}
By comparison with Remark \ref{rem:1}, one can calculate the Virmaux-Scaman bound by varying all $D_{i+1}\in \mbox{Ext}\left( {\cal D}_{i+1} \right)$ independently one of the other.
Therefore a direct calculation is possible with a cost
$O\left(2^{a_\ell }+\dots+2^{a_0} \right)$. Clearly, this is much less than the complexity of $K$. However, this is still impracticable if the number of neurons on one layer
is reasonably large (which is the case for standard neural networks). For this reason, in \cite{virmaux2019lipschitz} the authors propose to approximate $K_2$ instead of computing it exactly.
\end{remark}

\subsection{New certified bounds}\label{sec:new_bounds} 

In this section, we introduce two new upper bounds for the Lipschitz constant of fully-connected feed-forward neural networks. 
 Since we will need either one of the  equality expressed in (\ref{eq:norm_equiv_per_abs}),
 the symbol $\Vert\cdot\Vert$ will denote in this Section either $\Vert\cdot\Vert_1$ or $\Vert\cdot\Vert_\infty$ (and only one of these two norms). 

\begin{definition}[Element-wise absolute value of a matrix] \label{defi:df}
Given a matrix $A\in{\cal M}_{m,n}(\R)$ with entries $\{A_{ij}\}_{ij}$, we denote by $A^{\rm{abs}}\in{\cal M}_{m,n}(\R)$ the matrix obtained by applying the absolute value to each entry of $A$, i.e. $(A^{\rm{abs}})_{ij} = |A_{ij}|$, $\forall i,j$. 
\end{definition}

From the well  known  \cite{hingham} identities   $\Vert A\Vert_1 = \max_{1\le j\le n}\sum_{i=1}^m|A_{ij}|$
and $\Vert A\Vert_\infty = \max_{1\le i\le m}\sum_{j=1}^n|A_{ij}|$, one obtains 
 the equalities 
\begin{equation}\label{eq:norm_equiv_per_abs}
\Vert A\Vert_1 = \Vert A^{\rm{abs}}\Vert_1 \mbox{ and }
\Vert A\Vert_\infty = \Vert A^{\rm{abs}}\Vert_\infty .
\end{equation}
For other norms, then  the norm of a matrix may be  different form the norm of its absolute value.
Consider for example
$
A=\left(
\begin{array}{cc}
1 & 1 \\
-1 & 1
\end{array}
\right)
$
together with the euclidian norm
$
\|A\|_2$.  Then it is easy to check that 
$\|A\|_2=\sqrt 2< \| A^{\rm{abs}} \|_2=2$.
 We  introduce two evident lemmas used later in the rest of the Section.

\begin{lemma}\label{lem:prod_of_abs}
Let $A_1,A_2,\dots,A_N$ be $N$ matrices  which are compatible in the sense  that the product $A_N A_{N-1}\dots A_2 A_1\in{\cal M}_{m,n}(\R)$ is well defined. Then
one has 
\[
\left|
\left(A_N A_{N-1}\dots A_2 A_1\right)_{i,j} \right|\le \left(A_N^{\rm{abs}} A_{N-1}^{\rm{abs}}\dots A_2^{\rm{abs}} A_1^{\rm{abs}}\right)_{i,j}
\qquad \mbox{for all } 1\leq i \leq m \mbox{ and } 1\leq j \leq n
\]
and
$
\left\| A_N A_{N-1}\dots A_2 A_1\right\|\leq
\left\|
A_N^{\rm{abs}} A_{N-1}^{\rm{abs}}\dots A_2^{\rm{abs}} A_1^{\rm{abs}}
\right\|$. 
\end{lemma}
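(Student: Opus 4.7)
The plan is to prove the two claims in order, the pointwise inequality first and then deduce the norm inequality from it.

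For the pointwise bound, I would argue by induction on $N$. The base case $N=1$ is immediate from $|(A_1)_{ij}| = (A_1^{\rm{abs}})_{ij}$. For the inductive step, write $B = A_{N-1}\cdots A_1$ and use the matrix product formula together with the triangle inequality:
\[
\bigl|(A_N B)_{ij}\bigr| = \Bigl|\sum_k (A_N)_{ik} B_{kj}\Bigr| \le \sum_k |(A_N)_{ik}|\,|B_{kj}| = \sum_k (A_N^{\rm{abs}})_{ik}\,|B_{kj}|.
\]
By the induction hypothesis applied to $B$, $|B_{kj}| \le (A_{N-1}^{\rm{abs}}\cdots A_1^{\rm{abs}})_{kj}$, and since the entries of $A_N^{\rm{abs}}$ are nonnegative the sum is monotone in each $|B_{kj}|$. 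This yields
\[
\bigl|(A_N B)_{ij}\bigr| \le \sum_k (A_N^{\rm{abs}})_{ik}\,(A_{N-1}^{\rm{abs}}\cdots A_1^{\rm{abs}})_{kj} = (A_N^{\rm{abs}}\cdots A_1^{\rm{abs}})_{ij},
\]
which is the first claim.

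For the norm inequality, set $P = A_N\cdots A_1$ and $Q = A_N^{\rm{abs}}\cdots A_1^{\rm{abs}}$. The first claim says $(P^{\rm{abs}})_{ij} \le Q_{ij}$ entry-wise, and both matrices have nonnegative entries (the entries of $Q$ are nonnegative because $Q$ is a product of matrices with nonnegative entries). Since by assumption $\|\cdot\|$ denotes either $\|\cdot\|_1$ or $\|\cdot\|_\infty$, this norm is expressed as a maximum over sums of absolute values of entries, hence is monotone on the cone of nonnegative matrices: $0 \le A \le B$ entry-wise implies $\|A\| \le \|B\|$. Applying this with $A = P^{\rm{abs}}$ and $B = Q$, together with the identity \eqref{eq:norm_equiv_per_abs} (which gives $\|P\| = \|P^{\rm{abs}}\|$ and $\|Q\| = \|Q^{\rm{abs}}\| = \|Q\|$ since $Q$ is already nonnegative), yields $\|P\| = \|P^{\rm{abs}}\| \le \|Q\|$, which is the second claim.

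No single step should be hard; the only thing to be careful about is to use the norm only in the form where $\|A\| = \|A^{\rm{abs}}\|$ holds, i.e.\ the $l^1$ or $l^\infty$ case as emphasized in the paper, so that the monotonicity argument on nonnegative matrices is legitimate. The induction itself is the routine part.
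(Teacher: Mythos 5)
Your proof is correct and follows essentially the same route as the paper: the pointwise bound via the triangle inequality on the expanded product entries (the paper does this in one multi-indexed sum rather than by induction, which is only a presentational difference), and the norm inequality from the entrywise characterization of the $l^1$ and $l^\infty$ norms. Your explicit monotonicity argument for the norm step simply spells out what the paper leaves implicit.
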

\begin{proof}
The lemma is proved by expressing the entries of the product of matrices in terms of the entries $a_{i,j}^r$ of the initial matrices $A_r$, $r=1,\dots,N$.
One checks that 
\begin{equation*}
\begin{aligned}
\left(A_N A_{N-1}\dots A_2 A_1\right)_{i,j}^{\rm{abs}}  &= \left| \sum_{k_1,k_2,\dots,k_{N-1}}a_{i,k_{N-1}}^N a_{k_{N-1},k_{N-2}}^{N-1}\dots a_{k_2,k_1}^2 a_{k_1,j}^1\right| 
\\
&\le  \sum_{k_1,k_2,\dots,k_{N-1}}\left|a_{i,k_{N-1}}^N a_{k_{N-1},k_{N-2}}^{N-1}\dots a_{k_2,k_1}^2 a_{k_1,j}^1\right| \\
&=  \sum_{k_1,k_2,\dots,k_{N-1}}\left|a_{i,k_{N-1}}^N \right| \left|a_{k_{N-1},k_{N-2}}^{N-1}\right| \dots \left|a_{k_2,k_1}^2 \right| \left|a_{k_1,j}^1\right| \\
&= \left(A_N^{\rm{abs}} A_{N-1}^{\rm{abs}}\dots A_2^{\rm{abs}} A_1^{\rm{abs}}\right)_{i,j}. 
\end{aligned}
\end{equation*}
The definitions of the $l^1$ norm and of the $l^\infty$ norm yield the second inequality.
\end{proof}

\begin{lemma}\label{lem:prod_of_pos}
Take matrices $A,C\in{\cal M}_{m,n}(\R)$ and $B,D\in{\cal M}_{n,p}(\R)$ be 
such that
$
0\leq A_{i,j} \le C_{i,j}$ and $ 0\leq B_{i,j} \le D_{i,j} $
for all  values of $i$ and $j$. Then one has 
$
\left(AB\right)_{i,j} \le \left(CD\right)_{i,j}$, $\forall\, i=1,\dots,m,\,\, j=1,\dots,p,
$
and
$\left\| AB \right\| \leq \left\| CD \right\| $.
\end{lemma}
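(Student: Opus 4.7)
The plan is to first establish the entrywise inequality $(AB)_{i,j} \le (CD)_{i,j}$, from which the norm inequality will follow with essentially no extra work, using the $l^1$ and $l^\infty$ identities already recalled just before \eqref{eq:norm_equiv_per_abs}.

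For the entrywise part, I would expand $(AB)_{i,j} = \sum_{k=1}^n A_{i,k}B_{k,j}$ and compare term by term with $(CD)_{i,j} = \sum_{k=1}^n C_{i,k}D_{k,j}$. The key elementary observation is that whenever $0\le x\le x'$ and $0\le y\le y'$ one has $0\le xy\le x'y'$; applied with $x=A_{i,k}$, $x'=C_{i,k}$, $y=B_{k,j}$, $y'=D_{k,j}$ this gives $0\le A_{i,k}B_{k,j}\le C_{i,k}D_{k,j}$ for every index $k$. Summing yields $0\le (AB)_{i,j}\le (CD)_{i,j}$, which also records that both matrices $AB$ and $CD$ have nonnegative entries.

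For the norm inequality I would use the explicit formulas $\|M\|_1=\max_{1\le j\le p}\sum_{i=1}^m |M_{i,j}|$ and $\|M\|_\infty = \max_{1\le i\le m}\sum_{j=1}^p |M_{i,j}|$. Because $0\le (AB)_{i,j}\le (CD)_{i,j}$, the absolute values can be dropped on both sides, and summing the entrywise inequality over $i$ (respectively over $j$) gives a pointwise inequality between the column sums (respectively row sums). Taking the maximum over the remaining index preserves the inequality and produces $\|AB\|\le \|CD\|$ in either norm.

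There is no real obstacle here: the statement is simply the standard monotonicity of matrix multiplication on the cone of entrywise nonnegative matrices, combined with the monotonicity of the $l^1$ and $l^\infty$ operator norms on that cone. The slight subtlety worth flagging in the write-up is that the proof uses the specific structure of $\|\cdot\|_1$ and $\|\cdot\|_\infty$ (as already pointed out in the paper, the analogous property can fail for $\|\cdot\|_2$), so the norm $\|\cdot\|$ must be restricted to these two choices consistently with the convention fixed at the start of Section \ref{sec:new_bounds}.
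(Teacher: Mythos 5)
Your proof is correct and follows essentially the same route as the paper's: expand $(AB)_{i,j}=\sum_k A_{i,k}B_{k,j}$, compare termwise using nonnegativity, and then invoke the explicit column-sum/row-sum formulas for the $l^1$ and $l^\infty$ norms. You simply spell out the elementary monotonicity step and the norm computation in more detail than the paper does.
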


\begin{proof}
Indeed  one has 
$
\left(AB\right)_{i,j} = \sum_{k=1}^n A_{i,k}B_{k,j}\le \sum_{k=1}^n C_{i,k}D_{k,j}= \left(CD\right)_{i,j}$ for all $ i=1,\dots,m$ and $j=1,\dots,p$. 
The definitions of the $l^1$ and $l^\infty$ norms yield the second inequality.
\end{proof}


\begin{theorem}\label{theo:abs_bound}
One has the bound $
K\le K_3$ where 
$
K_3= \Vert W_\ell^{\rm{abs}}  W_{\ell-1}^{\rm{abs}} \dots W_0^{\rm{abs}} \Vert.
$
Moreover,  this new bound  is sharper than the worst bound, that is $K_3\leq K_\star$.
\end{theorem}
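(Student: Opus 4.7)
The plan is to combine the key fact \eqref{eq:norm_equiv_per_abs} with the two preparatory lemmas, and to exploit that the diagonal matrices $D_r \in \mathcal D_r$ have entries in $[0,1]$, so they are dominated entry-wise by the identity.

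First I would fix an arbitrary $D = (D_\ell, \ldots, D_1) \in \mathcal D$ and look at the matrix $M(D) := W_\ell D_\ell W_{\ell-1} D_{\ell-1} \cdots D_1 W_0$. By Lemma \ref{lem:prod_of_abs} applied to the $2\ell+1$ factors,
\[
\bigl|M(D)_{i,j}\bigr| \;\le\; \bigl(W_\ell^{\rm abs} D_\ell^{\rm abs} W_{\ell-1}^{\rm abs} D_{\ell-1}^{\rm abs} \cdots D_1^{\rm abs} W_0^{\rm abs}\bigr)_{i,j}
\]
for all $i,j$. Since each $D_r$ is diagonal with entries in $[0,1]$, one has $D_r^{\rm abs} = D_r$ and moreover $0 \le (D_r^{\rm abs})_{i,j} \le (I_r)_{i,j}$ entry-wise. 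All other factors $W_s^{\rm abs}$ have nonnegative entries, so iterating Lemma \ref{lem:prod_of_pos} (replacing each $D_r^{\rm abs}$ by $I_r$ one at a time) gives the entry-wise bound
\[
\bigl(W_\ell^{\rm abs} D_\ell^{\rm abs} W_{\ell-1}^{\rm abs} \cdots D_1^{\rm abs} W_0^{\rm abs}\bigr)_{i,j} \;\le\; \bigl(W_\ell^{\rm abs} W_{\ell-1}^{\rm abs} \cdots W_0^{\rm abs}\bigr)_{i,j}.
\]

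Combining these, $|M(D)_{i,j}| \le (W_\ell^{\rm abs} \cdots W_0^{\rm abs})_{i,j}$ entry-wise. Using \eqref{eq:norm_equiv_per_abs} and the fact that both $\Vert\cdot\Vert_1$ and $\Vert\cdot\Vert_\infty$ are monotone with respect to entry-wise inequalities between nonnegative matrices (this is immediate from the closed-form expressions as maxima of column/row sums of absolute values), we deduce
\[
\Vert M(D)\Vert \;=\; \Vert M(D)^{\rm abs}\Vert \;\le\; \Vert W_\ell^{\rm abs} W_{\ell-1}^{\rm abs} \cdots W_0^{\rm abs} \Vert \;=\; K_3.
\]
Since this holds for every $D\in \mathcal D$, taking the maximum yields $K \le K_3$.

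For the second inequality $K_3 \le K_\star$, I would simply apply submultiplicativity of the induced norm to the product defining $K_3$, and then use \eqref{eq:norm_equiv_per_abs} factor by factor, namely $\Vert W_r^{\rm abs}\Vert = \Vert W_r\Vert$ for the $l^1$ and $l^\infty$ norms:
\[
K_3 \;\le\; \prod_{r=0}^{\ell} \Vert W_r^{\rm abs}\Vert \;=\; \prod_{r=0}^{\ell} \Vert W_r\Vert \;=\; K_\star.
\]
There is no serious obstacle here; the only subtlety worth highlighting is that the whole argument genuinely needs the norm to satisfy $\Vert A\Vert = \Vert A^{\rm abs}\Vert$, which is precisely why the theorem is stated for $\Vert\cdot\Vert_1$ and $\Vert\cdot\Vert_\infty$ and not, for instance, for $\Vert\cdot\Vert_2$.
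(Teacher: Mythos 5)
Your proof is correct and follows essentially the same argument as the paper's: an entry-wise triangle inequality followed by bounding the diagonal entries of $D_r$ by $1$, then submultiplicativity together with \eqref{eq:norm_equiv_per_abs} for the second inequality. The only cosmetic difference is that you package the first step through Lemmas \ref{lem:prod_of_abs} and \ref{lem:prod_of_pos} (exactly as the paper does later for Theorem \ref{theo:fcnn_kabscp}), whereas the paper's proof of this particular theorem writes out the index computation directly.
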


\begin{proof}
The proof is made for  the $l^\infty$ norm. One has 
\begin{equation}\label{eq:proof_kabs_infty}
\begin{aligned}
K &= \max_{D\in{\cal{D}}} \Vert W_\ell D_\ell W_{\ell-1}D_{\ell-1}\dots D_1 W_0 \Vert_\infty\\
&= \max_{D\in{\cal{D}}} \max_i \sum_j \left\vert\sum_{k_\ell,\dots,k_1} w_{i,k_\ell}^\ell d_{k_\ell}^\ell w_{k_\ell,k_{\ell-1}}^{\ell-1} \dots
d_{k_1}^1 w_{k_1,j}^0 \right\vert \\
&\le\max_{D\in{\cal{D}}} \max_i \sum_j \sum_{k_\ell,\dots,k_1}\left\vert w_{i,k_\ell}^\ell d_{k_\ell}^\ell  w_{k_\ell,k_{\ell-1}}^{\ell-1}\dots
d_{k_1}^1 w_{k_1,j}^0 \right\vert \\
&\le \max_i \sum_j \sum_{k_\ell,\dots,k_1}\left\vert w_{i,k_\ell}^\ell\right\vert\left\vert w_{k_\ell,k_{\ell-1}}^{\ell-1}\right\vert\dots
\left \vert w_{k_1,j}^0 \right\vert 
= \Vert W_\ell^{\rm{abs}}  W_{\ell-1}^{\rm{abs}} \dots W_0^{\rm{abs}} \Vert_\infty.
\end{aligned}
\end{equation}
 The proof  for  the $l^1$ norm   is derived by exchanging  $i$ and $j$ in \eqref{eq:proof_kabs_infty}. It yields the first part of the claim.
 
The second part is obtained as follows. One has $
K_3 = \Vert W_\ell^{\rm{abs}}  W_{\ell-1}^{\rm{abs}} \dots W_0^{\rm{abs}} \Vert \le \Vert W_\ell^{\rm{abs}}\Vert \Vert W_{\ell-1}^{\rm{abs}}\Vert\dots \Vert W_0^{\rm{abs}}\Vert$
by  sub-multiplicativity of the norm.
Using \eqref{eq:norm_equiv_per_abs}  one has  
$  \Vert W_\ell^{\rm{abs}}\Vert \Vert W_{\ell-1}^{\rm{abs}}\Vert\dots \Vert W_0^{\rm{abs}}\Vert 
 =  \Vert W_\ell\Vert \Vert W_{\ell-1}\Vert\dots \Vert W_0\Vert$,
 so $K_3 \leq  K_*$.
\end{proof}

Next, we combine the Combettes-Pesquet technique with the bound $K_3$ and our choice of norm (either $\Vert\cdot\Vert_1$ or $\Vert\cdot\Vert_\infty$). 
It 
defines  a new bound 
\begin{equation}\label{eq:kabscp_def}
K_{4} = \dfrac{1}{2^\ell} \sum_{1\le r_1< r_2<\dots< r_n\le\ell}  \Vert W_{(\ell+1,r_{n})}^{\rm{abs}}\dots W_{(r_2,r_1)}^{\rm{abs}} W_{(r_1,0)}^{\rm{abs}}\Vert.
\end{equation}

\begin{theorem} \label{theo:fcnn_kabscp}
One has the bounds
\begin{equation}\label{eq:abscp_cp_ineq}
K\le K_4\le K_1.
\end{equation}
\end{theorem}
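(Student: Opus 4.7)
The plan is to adapt the Combettes--Pesquet change of variables $D_r = \frac{1}{2}(I_r + Z_r)$ used in the proof of Theorem \ref{def:kcp}, but to replace the final submultiplicativity step by an entrywise-absolute-value argument that exploits the special structure of the $l^1$ and $l^\infty$ norms through Lemmas \ref{lem:prod_of_abs} and \ref{lem:prod_of_pos}.

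For the bound $K \le K_4$, I would proceed as in \eqref{eq:k3}: substitute $D_r = \frac{1}{2}(I_r+Z_r)$ in \eqref{eq:fcnn_starting_bound}, expand the product, and group consecutive $W_r$ factors into the matrices $W_{(t,s)}$ of Definition \ref{defi:wts}, so that a triangular inequality gives
\[
K \le \frac{1}{2^\ell} \max_{Z\in\mathcal{Z}} \sum_{1\le r_1<\dots<r_n\le \ell} \bigl\| W_{(\ell+1,r_n)} Z_{r_n} W_{(r_n,r_{n-1})} \dots Z_{r_1} W_{(r_1,0)} \bigr\|.
\]
At this point, rather than applying submultiplicativity, I would invoke Lemma \ref{lem:prod_of_abs} on each term to pass to the entrywise absolute values, yielding
\[
\bigl\| W_{(\ell+1,r_n)} Z_{r_n} \dots Z_{r_1} W_{(r_1,0)} \bigr\| \le \bigl\| W_{(\ell+1,r_n)}^{\rm abs} Z_{r_n}^{\rm abs} \dots Z_{r_1}^{\rm abs} W_{(r_1,0)}^{\rm abs} \bigr\|.
\]
Then I would use the key observation that each $Z_r$ is diagonal with entries in $[-1,1]$, so $Z_r^{\rm abs}$ is a nonnegative diagonal matrix with $Z_r^{\rm abs} \le I_r$ entrywise. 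Since all the $W_{(t,s)}^{\rm abs}$ have nonnegative entries, an iterated application of Lemma \ref{lem:prod_of_pos} allows me to replace each $Z_r^{\rm abs}$ by $I_r$, producing a matrix that is entrywise larger and therefore has larger $l^1$ or $l^\infty$ norm. The resulting bound is independent of $Z$, so the maximum is redundant, and summing gives exactly $K_4$.

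For the bound $K_4 \le K_1$, I would argue termwise. Submultiplicativity of the norm applied inside each summand yields
\[
\bigl\| W_{(\ell+1,r_n)}^{\rm abs} \dots W_{(r_1,0)}^{\rm abs} \bigr\| \le \bigl\| W_{(\ell+1,r_n)}^{\rm abs} \bigr\| \cdots \bigl\| W_{(r_1,0)}^{\rm abs} \bigr\|,
\]
and the identities \eqref{eq:norm_equiv_per_abs} give $\| W_{(t,s)}^{\rm abs}\| = \| W_{(t,s)}\|$ for both chosen norms. Summing over all index tuples and dividing by $2^\ell$ reproduces the definition \eqref{eq:def_cp} of $K_1$, so $K_4 \le K_1$.

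The main subtlety, which I would want to justify carefully, is the step where the $Z_r^{\rm abs}$ are removed: Lemma \ref{lem:prod_of_pos} is stated for a product of two matrices, so I would need to explain that the iterative replacement is valid because at each stage the intermediate product $W_{(t_k,r_k)}^{\rm abs} Z_{r_k}^{\rm abs} W_{(r_k,r_{k-1})}^{\rm abs} \cdots$ still has nonnegative entries, making the comparison with $W_{(t_k,r_k)}^{\rm abs} I_{r_k} W_{(r_k,r_{k-1})}^{\rm abs} \cdots$ legitimate under the hypotheses of Lemma \ref{lem:prod_of_pos}. Everything else is a direct combination of the Combettes--Pesquet expansion and the two elementary lemmas of this section.
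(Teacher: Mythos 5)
Your proposal is correct and follows essentially the same route as the paper's proof: the Combettes--Pesquet expansion from \eqref{eq:k3}, then Lemma \ref{lem:prod_of_abs} to pass to entrywise absolute values, then the recursive use of Lemma \ref{lem:prod_of_pos} with the observation that $Z_r^{\rm abs}$ is entrywise dominated by the identity, and finally submultiplicativity together with \eqref{eq:norm_equiv_per_abs} for $K_4\le K_1$. The subtlety you flag about iterating the two-matrix Lemma \ref{lem:prod_of_pos} is exactly the point the paper handles by invoking that lemma ``recursively,'' so nothing is missing.
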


\begin{proof}
We start by proving $K\le K_4$. 
The third line in  (\ref{eq:k3}) rewrites as  
\begin{equation} \label{eq:kb1}
 K \leq \dfrac{1}{2^\ell} \max_{Z\in{\cal{Z}}}\sum_{1\le r_1< r_2<\dots< r_n\le\ell} \Vert W_{(\ell+1,r_{n})}Z_{r_n} W_{(r_{n},r_{n-1})} \dots
 W_{(r_2,r_1)}Z_{r_1} W_{(r_1,0)}\Vert.
\end{equation}
Lemma \ref{lem:prod_of_abs} shows that
\begin{equation} \label{eq:kb2}
\Vert W_{(\ell+1,r_{n})}Z_{r_n} W_{(r_{n},r_{n-1})} \dots
 W_{(r_2,r_1)}Z_{r_1} W_{(r_1,0)}\Vert
 \leq
 \Vert W_{(\ell+1,r_{n})}^{\rm abs}Z_{r_n} ^{\rm abs}W_{(r_{n},r_{n-1})} ^{\rm abs}\dots
 W_{(r_2,r_1)}^{\rm abs}Z_{r_1}^{\rm abs} W_{(r_1,0)}^{\rm abs}\Vert.
\end{equation}
 Generically, one has $\left( Z_{r} ^{\rm abs}\right)_{ii}\leq 1$ for all $i$ and $\left( Z_{r} ^{\rm abs}\right)_{ij}=0$ for all  $i\neq j$. 
 That is  the  matrix $Z_r$ is element-wise dominated  by the identity matrix of the same size.
 Lemma \ref{lem:prod_of_pos} used recursively yields that
 \begin{equation} \label{eq:kb3}
 \Vert W_{(\ell+1,r_{n})}^{\rm abs}Z_{r_n} ^{\rm abs}W_{(r_{n},r_{n-1})} ^{\rm abs}\dots
 W_{(r_2,r_1)}^{\rm abs}Z_{r_1}^{\rm abs} W_{(r_1,0)}^{\rm abs}\Vert
 \leq
 \Vert W_{(\ell+1,r_{n})}^{\rm abs}W_{(r_{n},r_{n-1})} ^{\rm abs}\dots
 W_{(r_2,r_1)}^{\rm abs}  W_{(r_1,0)}^{\rm abs}\Vert.
 \end{equation}
The combination of (\ref{eq:kb1}--\ref{eq:kb3}) yields the claim.


{The inequality $K_4\le K_1$ comes from
\begin{equation*}
\begin{aligned}
K_{4} & \le\dfrac{1}{2^\ell} \sum_{1\le r_1< r_2<\dots< r_n\le\ell}  \Vert W_{(\ell+1,r_{n})}^{\rm{abs}}\Vert\dots\Vert W_{(r_2,r_1)}^{\rm{abs}}\Vert\Vert W_{(r_1,0)}^{\rm{abs}}\Vert \\
&\le\dfrac{1}{2^\ell} \sum_{1\le r_1< r_2<\dots< r_n\le\ell}  \Vert W_{(\ell+1,r_{n})}\Vert\dots\Vert W_{(r_2,r_1)}\Vert\Vert W_{(r_1,0)}\Vert = K_1
\end{aligned}
\end{equation*}
thanks to the norm submultiplicativity and \eqref{eq:norm_equiv_per_abs}.
}
\end{proof}

\begin{proposition}\label{prop:k4_le_k3}
One has the bound
$
K_4 \le K_3$. 
\end{proposition}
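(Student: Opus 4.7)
The plan is to show that every single term in the sum defining $K_4$ is already bounded by $K_3$, after which the factor $\frac{1}{2^\ell}$ combined with the total count of $2^\ell$ terms closes the estimate.

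First, I would replace each factor $W_{(t,s)}^{\rm abs}$ appearing in a summand of $K_4$ by a product of element-wise absolute values of the individual weight matrices. By Definition \ref{defi:wts}, $W_{(t,s)} = W_{t-1} W_{t-2} \dots W_s$, and Lemma \ref{lem:prod_of_abs} (applied entry-wise, not to the norm) yields
\begin{equation*}
\bigl( W_{(t,s)}^{\rm abs}\bigr)_{i,j} = \bigl| (W_{t-1} \dots W_s)_{i,j}\bigr| \le \bigl( W_{t-1}^{\rm abs} \dots W_s^{\rm abs} \bigr)_{i,j}.
\end{equation*}
Because both matrices have non-negative entries, Lemma \ref{lem:prod_of_pos}, used recursively on each of the $n+1$ consecutive blocks, gives
\begin{equation*}
\bigl\Vert W_{(\ell+1,r_n)}^{\rm abs} \dots W_{(r_2,r_1)}^{\rm abs} W_{(r_1,0)}^{\rm abs} \bigr\Vert \le \bigl\Vert W_\ell^{\rm abs} W_{\ell-1}^{\rm abs} \dots W_0^{\rm abs} \bigr\Vert = K_3
\end{equation*}
for every index choice $1 \le r_1 < r_2 < \dots < r_n \le \ell$.

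Finally, I would note that the number of terms in the outer sum defining $K_4$ is exactly $2^\ell$ (as pointed out in the remark following Theorem \ref{def:kcp}), so bounding each summand by $K_3$ gives
\begin{equation*}
K_4 \le \dfrac{1}{2^\ell} \sum_{1 \le r_1 < \dots < r_n \le \ell} K_3 = \dfrac{1}{2^\ell} \cdot 2^\ell \cdot K_3 = K_3,
\end{equation*}
which is the desired inequality. There is no real obstacle here: the argument is a direct combination of the two element-wise lemmas already proven, together with the counting of summands that was used in the proof of Theorem \ref{theo:fcnn_kabscp}.
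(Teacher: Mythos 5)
Your proof is correct and follows essentially the same route as the paper's: the entry-wise bound from Lemma \ref{lem:prod_of_abs} on each block $W_{(t,s)}^{\rm abs}$, the recursive application of Lemma \ref{lem:prod_of_pos} to pass to the norm inequality against $\Vert W_\ell^{\rm abs}\dots W_0^{\rm abs}\Vert$, and the count of $2^\ell$ summands cancelling the prefactor. Nothing is missing.
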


\begin{proof}
Consider the generic term in the sum of $K_4$ 
\begin{equation}\label{eq:abs_cp_gen_term}
\Vert W_{(\ell+1,r_{n})}^{\rm{abs}}\dots W_{(r_2,r_1)}^{\rm{abs}} W_{(r_1,0)}^{\rm{abs}}\Vert
\end{equation}
for suitable indices $1\le r_1<\dots< r_n\le\ell$. Then consider the generic matrix $W_{r_{i+1},r_i}^{\rm{abs}}$ in \eqref{eq:abs_cp_gen_term}, $i=0,\dots,n$, where $r_0=0$ and $r_{n+1}=\ell+1$. For all admissible indices $i,j$, the following inequality holds as a consequence of Lemma \ref{lem:prod_of_abs}
\begin{equation}\label{eq:abs_le_prod_of_abs}
\left(W_{r_{k+1},r_k}^{\rm{abs}}\right)_{i,j} 
=
\left|
\left(W_{r_{k+1},r_k}\right)_{i,j} 
\right|
\le \left( W_{r_{k+1}-1}^{\rm{abs}} W_{r_{k+1}-2}^{\rm{abs}}\dots W_{r_{k}+1}^{\rm{abs}}W_{r_k}^{\rm{abs}}\right)_{i,j}.
\end{equation}
 We can recursively use Lemma \ref{lem:prod_of_pos} to claim that 
\begin{equation}\label{eq:single_norm_ineq_abscp_le_abs}
\Vert W_{(\ell+1,r_{n})}^{\rm{abs}}\dots W_{(r_2,r_1)}^{\rm{abs}} W_{(r_1,0)}^{\rm{abs}}\Vert \le \Vert W_\ell^{\rm{abs}}\dots W_1^{\rm{abs}}W_0^{\rm{abs}}\Vert .
\end{equation}
Therefore
\begin{equation*}
\begin{aligned}
K_4 &= \dfrac{1}{2^\ell} \sum_{1\le r_1< r_2<\dots< r_n\le\ell}  \Vert W_{(\ell+1,r_{n})}^{\rm{abs}}\dots W_{(r_2,r_1)}^{\rm{abs}} W_{(r_1,0)}^{\rm{abs}}\Vert \\
&\le \dfrac{1}{2^\ell} \sum_{1\le r_1< r_2<\dots< r_n\le\ell}  \Vert W_\ell^{\rm{abs}}\dots W_1^{\rm{abs}}W_0^{\rm{abs}}\Vert
= \dfrac{1}{2^\ell} 2^\ell \Vert W_\ell^{\rm{abs}}\dots W_1^{\rm{abs}}W_0^{\rm{abs}}\Vert
= K_3,
\end{aligned}
\end{equation*}
where we use the definition of $K_3$ and the fact that the sum comprises $2^\ell$ terms that can be bounded by the same quantity independent from the subscripts $r_1,\dots, r_n$.
\end{proof}

In summary, we derived the series of inequalities
$$
L\leq K\leq K_4\leq \min \left(K_1, K_3 \right) \le  \max \left(K_1, K_3 \right)\leq K_\star.
$$
With this respect $K_4$ is a better upper bound than the others. How much better? It will be evaluated in the numerical Section with basic numerical experiments.

\section{Convolutional neural networks}\label{sec:conv}

  A CNN is  a specialized kind of neural network \cite{goodi,desp:book} for processing data that have a known, grid-like topology.
  CNNs    are tremendously successful in practical applications. 
It is therefore appealing to examine
 under what conditions the previous material, developed for fully-connected feed-forward neural networks,  can be generalized to CNNs. 
A generic   CNN function \cite{li2021survey}  is written under the form 
\begin{equation}\label{eq:conv_def}
g = 
 g_\ell \circ T_\ell \circ g_{\ell-1} \circ T_{\ell-1} \circ\dots \circ T_1 \circ g_0.
\end{equation}
This structure formally seems very similar to the one of feed-forward neural networks (\ref{eq:fcnn_def}) but there are important differences which are summarized in the next Remarks.

\begin{remark}\label{rem:remi1}
In  (\ref{eq:fcnn_def})  the $f_r$ are linear and have ability to change the dimension (of the underlying space),
 while the $S_r$ are non linear activation  functions and  do not change the dimension. 
In (\ref{eq:conv_def}) the notation cannot be as simple.
We choose arbitrarily that the $g_r$  have ability to change the dimension and that the $T_r$ do not change the dimension.
It means that some $g_r$ can be non linear  while some $T_r$ can be linear. 
\end{remark}

\begin{remark} \label{rem:remi2}
The main example that motivates the classification made in above Remark  \ref{rem:remi1} is the max-pooling function detailed below. Indeed the max-pooling function   is non linear and changes the dimension
so it cannot fit
in the simple structure (\ref{eq:fcnn_def}). Since a max-pooling function $g_r$ is, to the best of our knowledge, rarely followed by a non linear function, it is necessary
to allow the functions $T_{r+1}$ to be equal to the identity if it is needed, that is $T_{r+1}=I$. Then it makes  $T_{r+1}$ a linear function.
In summary all $g_r$ and all $T_r$  can be linear or non linear in  our modeling of  convolutional neural networks.
\end{remark}

\begin{remark}\label{rem:remi2_v2}
As max-pooling layers, also average-pooling layers are rarely followed by non linear activation functions. In this case, the operator $g_{r+1}\circ T_r\circ g_r$, composed by an average-pooling layer $g_r$, the identity operator $T_r$ and another linear operator $g_{r+1}$, is linear and can be exactly represented by a single linear layer which matrix is the product between the matrices associated with $g_{r+1}$ and $g_r$. Since this merging operation is always possible and preserve the structure \eqref{eq:conv_def}, in the following we always assume that all triplets of consecutive linear operators are already merged, i.e. if $g_{r+1}$ and $g_r$ are linear, then $T_r$ is non linear.
\end{remark}

\begin{remark} \label{rem:remi3}
It is common for CNN  \cite{goodi} to finish with another non linear
 function  $T_{\ell+1}$ called softmax function \cite{desp:book}, because it is very effective  for classification tasks.
We just disregard  $T_{\ell+1}$  for the simplicity of notation.
It  brings no additional difficulty to keep it because its Lipschitz constant is naturally equal to  1. 
\end{remark}

\begin{remark} \label{rem:remi4}
CNN are mainly used for classification purposes, contrary to feed-forward  neural networks which are mainly used for regression purposes.
For classification, it is highly possible that one tries to approximate (by tuning the coefficient of the networks) a  function with low regularity.
In this case, it is not clear what the Lipschitz constant of the function modeled by (\ref{eq:conv_def}) should be.
Nevertheless it is easy to imagine simple examples which are relevant for classification where the objective function is smooth (in particular if a thresholding post-processing
mechanism
is added). In this case smoothness could as well bring some stability in the approximation/training process.
That is why we think that it is a relevant mathematical question to inquire about the Lipschitz constant of convolutional neural networks.
\end{remark}

%
%
%

 Let us now describe more precisely the different functions we have in mind.


The functions  $g_i$, $i=0,\dots,\ell$ can be either linear functions denoted   as $g_i^{\rm{lin}}(x)=W_i x + b_i$ (as for a feed-forward neural networks), or
convolutional functions denoted by $g_i^{\rm{conv}}$, or average-pooling functions denoted by $g_i^{\rm{avg\_pool}}$ (a pooling function is a particular 
averaging function). These functions being all linear, they do not bring conceptually new material with respect to feed-forward neural networks.
Another type of non linear  function $g_i^{\rm{max\_pool}}$, named max-pooling function, will be introduced below.
All these different functions $g_i$ have the ability to change the dimension of the data: that is $g_i: \mathbb R^{a_i}\to \mathbb R^{a_{i+1}}$ where
$a_{i+1}\neq a_i$ is a possibility.
The  functions
  $T_i$, $i=1,\dots,\ell$ can be any  activation function or the identity function. 
  All these different functions $T_i$ do not change  the dimension of the data: that is $T_i: \mathbb R^{a_{i+1}}\to \mathbb R^{a_{i+1}}$.
  As in the previous Section,  the non linear activation functions $T_i$ are 
  such that  $0\le T_i'(x)\le 1$ for almost all  $ x\in\R$ and are applied component wise.
  \\
A linear function $g_i^{\rm{lin}}: \mathbb R^{a_i}\to \mathbb R^{a_{i+1}}$ is written as before:
$$
g_i^{\rm{lin}}(x) = W_ix+b_i.
$$
 A convolutional  function  $g_i^{\rm{conv}}: \mathbb R^{a_i}\to \mathbb R^{a_{i+1}}$ operates  the convolution between the input and a convolution  matrix $K_i^{\rm{conv}}$, with the addition of  a  bias
\[
g_i^{\rm{conv}}(x) = K_i^{\rm{conv}} * \overline x +\overline b_i . 
\]
Since convolution operators are linear operators, there exists a matrix $W_i=W_i^{\rm{conv}}$ such that
\[
g_i^{\rm{conv}}(x) = W_i  x +  b_i,  
\]
where $ x$ and $ b_i$ are vectors obtained by serializing/reindexing the vectors $\overline x$ and $\overline b_i$. 
With this notation,  $W_i^{\rm{conv}}$ is a double circulant matrix (this can be considered as a practical definition of a convolution operator in our context).
 The correspondance between $x$ and $\overline x$ is explained in more details  in the following example.
Let us take a vertical vector
\begin{equation} \label{eq:x_s1}
x=
\begin{bmatrix}
    x_{11} &  x_{12} & x_{13} &
    x_{21} & x_{22} & x_{23} &
    x_{31} & x_{32} & x_{33} 
\end{bmatrix}^T.
\end{equation}
A re-indexation  allows to write 
\begin{equation} \label{eq:x_s2}
 \overline x = 
\begin{bmatrix}
    x_{11} & x_{12} & x_{13} \\
    x_{21} & x_{22} & x_{23} \\
    x_{31} & x_{32} & x_{33} 
\end{bmatrix}.
\end{equation}

An average-pooling function $g_i^{\rm{avg\_pool}}: \mathbb R^{a_i}\to \mathbb R^{a_{i+1}}$ calculates the average of $x$ on patches which correspond to a certain multidimensional structure
of the data stored in $x$. Since it is a linear operator, there exist a matrix $W_i^{\rm{avg\_pool}}$ such that $g_i^{\rm{avg\_pool}}(x) = W_i^{\rm{avg\_pool}} x$ as for convolutional layers. An average-pooling function  with filter size (2,2) and stride 1 applied to the matrix
correspond to the averaging of all $2\times 2$ sub-matrix possible in (\ref{eq:x_s2}).
It yields 
\begin{equation} \label{eq:ggg}
g_i^{\rm{avg\_pool}}( \overline x) =
\begin{bmatrix}
    \avg(x_{11}, x_{12}, x_{21}, x_{22}) &  \avg(x_{12}, x_{13}, x_{22}, x_{23})  \\
    \avg(x_{21}, x_{22}, x_{31}, x_{32}) &  \avg(x_{22}, x_{23}, x_{32}, x_{33}) 
\end{bmatrix}
\end{equation}
where $  \avg(a,b,c,d)=\frac14(a+b+c+d) $.
The very same function can of course be 
defined for the vector $x$ in (\ref{eq:x_s1}).
One obtains
\begin{equation}\label{eq:avg_pool_mat}
g_i^{\rm{avg\_pool}}(x) = 
\begin{bmatrix}
    \frac14 &\frac14 &0 &\frac14 &\frac14 &0 &0 &0 &0 \\[0.2cm]
    0 &\frac14 &\frac14 &0 &\frac14 &\frac14 &0 &0 &0  \\[0.2cm]
    0& 0& 0& \frac14 &\frac14 &0 &\frac14 &\frac14 &0  \\[0.2cm]
    0& 0& 0& 0& \frac14 &\frac14 &0 &\frac14 &\frac14 
\end{bmatrix}
\begin{bmatrix}
    x_{11} \\  x_{12} \\ x_{13} \\
    x_{21} \\ x_{22} \\ x_{23} \\
    x_{31} \\ x_{32} \\ x_{33} 
\end{bmatrix}.
\end{equation}
Since the difference between (\ref{eq:ggg}) and (\ref{eq:avg_pool_mat}) is just a matter of re-indexation, these two functions are  the same
and they are identified. The same re-indexation and identification process is used for convolution functions \cite{goodi,desp:book}. \\
The  max-pooling operation of function is the most original one since it introduces a non linearity.
Using the notation (\ref{eq:x_s2}), it is written as 
\begin{equation} \label{eq:x_s5}
g_i^{\rm{max\_pool}}(\overline x) = 
\begin{bmatrix}
    \max(x_{11}, x_{12}, x_{21}, x_{22}) &  \max(x_{12}, x_{13}, x_{22}, x_{23})  \\
    \max(x_{21}, x_{22}, x_{31}, x_{32}) &  \max(x_{22}, x_{23}, x_{32}, x_{33}) 
\end{bmatrix}.
\end{equation}
 By construction,   a CNN (\ref{eq:conv_def}) is a continuous and globally Lipschitz  
 $$
 g\in C^0(\mathbb R^{a_0})^{a_{\ell+1}}\bigcap \mbox{Lip}(\mathbb R^{a_0})^{a_{\ell+1}}
 $$ 
and the Rademacher Theorem  is still valid.
 The rest of the Section is devoted to generalize  to CNNs the bounds $K$, $K_\star$, $K_1$, $K_3$ and $K_4$ already  developed for  
 fully-connected feed-forward neural networks.
 There is no additional difficulty for linear functions and for classical  activation functions, that is for $g_i^{\rm{lin}}$,
 $g_i^{\rm{conv}}$, $g_i^{\rm{avg\_pool}}$ and $T_i$. The technical difference is for the max-pooling functions $g_i^{\rm{max\_pool}}$.
 We distinguish two approaches called the explicit approach and the implicit approach.
 
To be consistent with Section \ref{sec:new_bounds}, we restrict our analysis to the norms $\Vert\cdot\Vert_1$ and $\Vert\cdot\Vert_\infty$ and we denote by $\Vert\cdot\Vert$ any of these two norms. We also highlight that we do not consider the bound $K_2$ since the matrices associated with convolutional neural networks are too large and it would be impossible to compute such bound as explained in Remark \ref{rem:scb}.

\subsection{Explicit approach}\label{sec:expl_app}

The first method that we consider for the decomposition of a max-pooling function $g_i^{\rm{max\_pool}}$ is based on the functional identity
\begin{equation}\label{eq:max_abs_rel}
\begin{aligned}
\max(x_1, x_2) &= \frac12(x_1+x_2)+\frac12\left|x_1-x_2\right|\\
&=\frac12
\begin{bmatrix}
    1&1 
\end{bmatrix}
\begin{bmatrix}
    x_1\\x_2
\end{bmatrix} + \frac12 \abs\left(
\begin{bmatrix}
    1&-1 
\end{bmatrix}
\begin{bmatrix}
    x_1\\x_2
\end{bmatrix}\right)\\
&=\frac12
\begin{bmatrix}
    1&1 
\end{bmatrix}
\begin{bmatrix}
    x_1\\x_2
\end{bmatrix} + \frac12 z
\begin{bmatrix}
    1&-1 
\end{bmatrix}
\begin{bmatrix}
    x_1\\x_2
\end{bmatrix},
\end{aligned}
\end{equation}
where $z=\pm 1$.
This decomposition is the generalization to max-pooling layers of the decomposition (\ref{eq:dtoz}) which was considered for an activation function in the Combettes-Pesquet approach \cite{combettes2020lipschitz}.
 We call it \textit{explicit} because the linear 
 matrices $\begin{bmatrix}
    1&1 
\end{bmatrix}$  and $\begin{bmatrix}
    1&-1 
\end{bmatrix}$ are explicit. The non linear part depends on just one coefficient $z$.

An interesting property is linked to this decomposition. Let us consider $ u(x_1,x_2):=\max(x_1,x_2)$.
We also consider $v(x_1,x_2)=x_1+x_2$ which corresponds to the first part of the decomposition (\ref{eq:max_abs_rel})
and $w(x_1,x_2)=x_1-x_2$ which corresponds to the second part of the decomposition.
We write
$u=\frac12 v +\frac12 zw $ with $ z=\pm 1$.
The gradient of $u$ is (almost everywhere)
 $\nabla u=\begin{bmatrix}
    1&0
\end{bmatrix}$ or $\nabla u=\begin{bmatrix}
    0&1
\end{bmatrix}$.
Also $\nabla v=\begin{bmatrix}
    1&1
\end{bmatrix}$ and $\nabla w=\begin{bmatrix}
    1&-1
\end{bmatrix}$.
So one has the  relation  for the gradients (almost everywhere with respect to $x$)
\begin{equation} \label{eq:uvw}
\nabla u=\frac12 \nabla v +\frac12 z \nabla w 
\end{equation}
with $ z=\pm 1$.
Comparing with (\ref{eq:max_abs_rel}), one obtains the identity
$u(x)=  \nabla u (x)\cdot x $ which is 
 the Euler identity for homogeneous function of degree one (the function $ u(x_1,x_2):=\max(x_1,x_2)$ is indeed  homogeneous of degree one).
Then the triangular inequality yields (once again almost everywhere with respect to $x$)
\begin{equation} \label{eq:tuvw}
\| \nabla u \|_{l^p} \leq \frac 12 \| \nabla v \|_{l^p} +\frac12 \| \nabla w \|_{l^p}
\end{equation}
where the $l^p$ norm is the induced norm for operators, $1\leq p \leq \infty$.

\begin{lemma} \label{lem:3.1}
The $l^1$ norm is optimal for the triangular inequality (\ref{eq:tuvw}), in the sense that it is an equality for $p=1$ and a strict inequality for $1<p\leq \infty$.
\end{lemma}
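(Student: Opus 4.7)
The plan is to compute the three operator norms appearing in \eqref{eq:tuvw} explicitly as functions of $p$ and then compare them. Since $\nabla u$, $\nabla v$, $\nabla w$ are all $1\times 2$ row matrices, viewed as linear maps $\mathbb R^2\to\mathbb R$, their induced $l^p$ operator norms can be computed by duality. Concretely, for any row matrix $M=[a,b]$ one has
\begin{equation*}
\|M\|_{l^p} = \sup_{\|x\|_{l^p}\le 1}|ax_1+bx_2| = \|(a,b)\|_{l^q},
\end{equation*}
where $q$ is the conjugate exponent defined by $\tfrac1p+\tfrac1q=1$, the last equality being the standard Hölder/duality identity for linear forms.

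Applied to the three matrices of interest, this gives
\begin{equation*}
\|\nabla u\|_{l^p}=1,\qquad \|\nabla v\|_{l^p}=\|\nabla w\|_{l^p}=(1^q+1^q)^{1/q}=2^{1/q},
\end{equation*}
so that inequality \eqref{eq:tuvw} reduces to the scalar relation $1\le 2^{1/q}$. I would then observe that this is an equality precisely when $2^{1/q}=1$, i.e. when $q=\infty$, which corresponds exactly to $p=1$. For $1<p\le\infty$ one has $1\le q<\infty$, hence $2^{1/q}>1$, and the inequality is strict.

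To keep the argument rigorous, I would handle the two endpoints separately, even though they are limiting cases of the Hölder formula: for $p=1$, I would recall the standard identity $\|M\|_{l^1}=\max_j\sum_i|M_{ij}|$ which here yields $\|\nabla u\|_{l^1}=\|\nabla v\|_{l^1}=\|\nabla w\|_{l^1}=1$, so that both sides of \eqref{eq:tuvw} equal $1$; for $p=\infty$, I would use $\|M\|_{l^\infty}=\max_i\sum_j|M_{ij}|$ and obtain $\|\nabla u\|_{l^\infty}=1$ while $\|\nabla v\|_{l^\infty}=\|\nabla w\|_{l^\infty}=2$, giving $1<2$. The remaining $1<p<\infty$ case is exactly the strict version of the Hölder computation above.

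There is no real obstacle here. The only subtle point is to make sure that one is using the correct induced operator norm on row matrices (dual formula $\|M\|_{l^p}=\|(a,b)\|_{l^q}$); once this is stated, the rest of the proof is a half-line comparison of the two constants $1$ and $2^{1/q}$.
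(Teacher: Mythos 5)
Your proof is correct and follows essentially the same route as the paper's: both arguments reduce the inequality to the scalar comparison $1\le 2^{1/q}$ by computing the operator norms of the three row matrices via H\"older/duality, and conclude that equality holds exactly when $q=\infty$, i.e.\ $p=1$. Your separate treatment of the endpoint cases is a minor presentational addition but not a different method.
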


\begin{proof}
 Clearly $L(u)=\left( \| \nabla u \|_{l^p}\right)_{L^\infty(\mathbb R^2)}=1$.
The $l^p$ norm of the operator $\nabla v=\begin{bmatrix}
    1&1
\end{bmatrix}$ is evaluated with (\ref{eq:valval}). It yields 
$$
L(v)=
 \| \nabla v \|_{l^p}= \max_{(a,b)\neq 0} \frac{|a+b|}{\left(|a|^p +|b|^p \right)^\frac1p}  
 \leq 
  \max_{(a,b)\neq 0} \frac{\left(|a|^p +|b|^p \right)^\frac1p \left(2 \right)^\frac1q}{\left(|a|^p +|b|^p \right)^\frac1p}=2^\frac1q
$$
where we used a H\"older inequality and $q$ is  conjugate to $p$, that is $\frac 1p+\frac1q=1$. This is actually  an equality since the H\"older inequality
is optimal, so $L(v)=2^\frac1q$. For the same reason $L(w)=2^\frac1q$.
Then the triangular inequality $L(u)\leq \frac12 L(v)+\frac12 L(w)$ reduces to
$
1\leq 2^\frac1q\Longleftrightarrow 1\leq 2^{\frac{p-1}p}$. This inequality is an equality only for $p=1$ which yields the claim.
\end{proof}

 This method can be applied recursively to compute a max-pooling function of any structure.
 A first application is as follows. 
 
  \begin{lemma} \label{lem:mn}
 The representation (\ref{eq:max_abs_rel}) composed $n$ times yields the function
$
\max(x_1, x_2,\dots, x_n)
$.
 A bound on the $l^p$ norm  of the gradient  is $2^{\frac{(p-1)(n-1)}p}$.
 \end{lemma}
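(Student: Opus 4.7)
The plan is to induct on $n\ge 2$. The base case $n=2$ is the content of Lemma~\ref{lem:3.1}: applying (\ref{eq:max_abs_rel}) once yields $\max(x_1,x_2)$, and the triangle inequality from the proof of that lemma gives the operator norm bound $2^{(p-1)/p}$, which matches the claimed formula for $n=2$.

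For the inductive step, I would decompose $\max(x_1,\dots,x_n)=G\circ F(x_1,\dots,x_n)$, where $F:\R^n\to\R^2$ is defined by $F(x)=(\max(x_1,\dots,x_{n-1}),x_n)$ and $G:\R^2\to\R$ by $G(y,z)=\max(y,z)$. The representation statement is then immediate: by the inductive hypothesis $\max(x_1,\dots,x_{n-1})$ arises from nested uses of (\ref{eq:max_abs_rel}), and applying (\ref{eq:max_abs_rel}) once more to the outer max produces $\max(x_1,\dots,x_n)$. For the gradient bound, I would observe that the Jacobian of $F$ (well-defined almost everywhere by Rademacher) is block-diagonal, with blocks $\nabla\max(x_1,\dots,x_{n-1})\in\R^{1\times(n-1)}$ and the $1\times 1$ identity. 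Since the induced $l^p\to l^p$ norm of a block-diagonal matrix equals the maximum of the two block norms, the inductive hypothesis yields
\[
\|JF\|_{l^p}\le\max\!\bigl(2^{(p-1)(n-2)/p},\,1\bigr)=2^{(p-1)(n-2)/p}.
\]
Combining this with the bound $\|\nabla G\|_{l^p}\le 2^{(p-1)/p}$ from Lemma~\ref{lem:3.1} via the chain rule gives $\|\nabla\max(x_1,\dots,x_n)\|_{l^p}\le 2^{(p-1)/p}\cdot 2^{(p-1)(n-2)/p}=2^{(p-1)(n-1)/p}$, closing the induction.

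The main technical point I anticipate is the rigorous use of the chain rule at points where $F$ or $G$ is not classically differentiable. Both are continuous and piecewise linear, so the non-differentiability set is a finite union of hyperplanes; on its complement the standard chain rule gives $\nabla(G\circ F)=\nabla G\cdot JF$, which is enough to control the essential-supremum Lipschitz constant entering the statement. The subsidiary fact that a block-diagonal matrix has induced $l^p$ norm equal to the maximum of its block norms is classical and follows directly from the identity $\|M(v_1,v_2)\|_{l^p}^p=\|B_1v_1\|_{l^p}^p+\|B_2v_2\|_{l^p}^p$ by concentrating the test vector in one of the two blocks.
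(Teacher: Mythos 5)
Your proof is correct and follows essentially the same route as the paper, which simply writes $\max(x_1,\dots,x_n)=\max(x_1,\max(x_2,\dots))$ and iterates Lemma~\ref{lem:3.1}; your induction with the block-diagonal Jacobian of $F(x)=(\max(x_1,\dots,x_{n-1}),x_n)$ and the chain rule is just a careful formalization of that iteration, picking up one factor $2^{(p-1)/p}$ per nesting. The only detail worth keeping in mind is that for $p=\infty$ the identity $\|M(v_1,v_2)\|_{l^p}^p=\|B_1v_1\|_{l^p}^p+\|B_2v_2\|_{l^p}^p$ should be replaced by the corresponding maximum, but the conclusion that the induced norm of a block-diagonal matrix is the maximum of the block norms still holds.
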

 
 \begin{proof}
 Write  $\max(x_1, x_2,\dots, x_n)= \max(x_1, \max( x_2,, \dots ))$ and iterate Lemma \ref{lem:3.1}. 
 \end{proof}
 
   A second  application is max-pooling combined with convolution. 
  Instead of presenting a complicate theory which will bring very little in terms of ideas, let us consider just one example  which operates on either (\ref{eq:x_s1})
  or (\ref{eq:x_s2}) with kernel size $(2,1)$ and stride 1 (row-wise max-pooling). So the pooling is made with every  pairs in (\ref{eq:x_s2}) on the same {\bf row}. One writes 
\begin{equation}\label{eq:horiz_conv}
g_{\rm{row}}^{\rm{max\_pool}}(x) =\frac12 M_{\rm{row}}^{+} x + \frac12 Z_{\rm{row}}^{\rm{max\_pool}}(x)M_{\rm{row}}^{-}  x
\end{equation}
where $ x\in \mathbb R^9$ is given in (\ref{eq:x_s1}) and 
\begin{equation*}
\begin{aligned}
 M_{\rm{row}}^{+}=
\begin{bmatrix}
    1 &1 &0 &0 &0 &0 &0 &0 &0 \\[0.2cm]
    0 &1 &1 &0 &0 &0 &0 &0 &0  \\[0.2cm]
    0& 0& 0& 1 &1 &0 &0 &0 &0  \\[0.2cm]
    0& 0& 0& 0& 1 &1 &0 &0 &0 \\[0.2cm]
    0& 0& 0& 0 &0 &0 &1 &1 &0  \\[0.2cm]
    0& 0& 0& 0& 0 &0 &0 &1 &1 
\end{bmatrix}, \quad 
M_{\rm{row}}^{-} =
\begin{bmatrix}
    1 &-1 &0 &0 &0 &0 &0 &0 &0 \\[0.2cm]
    0 &1 &-1 &0 &0 &0 &0 &0 &0  \\[0.2cm]
    0& 0& 0& 1 &-1 &0 &0 &0 &0  \\[0.2cm]
    0& 0& 0& 0& 1 &-1 &0 &0 &0 \\[0.2cm]
    0& 0& 0& 0 &0 &0 &1 &-1 &0  \\[0.2cm]
    0& 0& 0& 0& 0 &0 &0 &1 &-1 
\end{bmatrix}
\end{aligned}
\end{equation*}
and  $Z_{\rm{row}}^{\rm{max\_pool}}(x)\in{\cal M}_{6,6}(\R)$ is a diagonal matrix with diagonal elements $1$ or $-1$. 
The result is a vector of size $6$.

Analogously, a max-pooling layer  is  performed on {\bf columns}
with kernel size $(1,2)$ and stride 1 (column-wise max-pooling) applied on the resulting vector  $x\in \mathbb R^6$. One writes 
\begin{equation}\label{eq:vert_conv}
g_{\rm{col}}^{\rm{max\_pool}}(x) = 
\frac12 M_{\rm{col}}^{+} x + \frac12 Z_{\rm{col}}^{\rm{max\_pool}}(x)M_{\rm{col}}^{-}  x
\end{equation}
where $x=
\begin{bmatrix}
    x_{11} &  x_{12}  &
    x_{21} & x_{22}  &
    x_{31} & x_{32}  
\end{bmatrix}^T\in \mathbb R^6$ and 
\begin{equation*}
\begin{aligned}
M_{\rm{col}}^{+}=
\begin{bmatrix}
    1 &0  &1 &0 &0 &0 \\[0.2cm]
    0 &1 &0 & 1 &0 &0   \\[0.2cm]
    0& 0& 1 &0 &1 &0  \\[0.2cm]
    0& 0& 0& 1& 0 &1 
\end{bmatrix},
\quad M_{\rm{col}}^{-} =
\begin{bmatrix}
    1 &0  &-1 &0 &0 &0 \\[0.2cm]
    0 &1 &0 & -1 &0 &0   \\[0.2cm]
    0& 0& 1 &0 &-1 &0  \\[0.2cm]
    0& 0& 0& 1& 0 &-1 

\end{bmatrix}
\end{aligned}
\end{equation*}
where $Z_{\rm{col}}^{\rm{max\_pool}}(x)\in{\cal M}_{4,4}(\R)$ is a diagonal matrix with elements $1$ and $-1$.
The composition yields a max-pooling function  $g^{\rm{max\_pool}}:\mathbb R^{9}\to \mathbb R^4$ (with kernel size $(2,2)$) 
\begin{equation}\label{eq:pooling_split}
g^{\rm{max\_pool}}(x) = g_{\rm{col}}^{\rm{max\_pool}} \circ g_{\rm{row}}^{\rm{max\_pool}}(x) 
= \left(\frac12 M_{\rm{col}}^{+} + \frac12 Z_{\rm{col}}^{\rm{max\_pool}}(x)M_{\rm{col}}^{-} \right)\left(\frac12 M_{\rm{row}}^{+} + \frac12 Z_{\rm{row}}^{\rm{max\_pool}}(x)M_{\rm{row}}^{-} \right)
  x
\end{equation}
The gradient of the function is (almost everywhere in $x$)
\begin{equation}\label{eq:pooling_split:g}
\nabla g^{\rm{max\_pool}}(x) 
= \left(\frac12 M_{\rm{col}}^{+} + \frac12 Z_{\rm{col}}^{\rm{max\_pool}}(x)M_{\rm{col}}^{-} \right)\left(\frac12 M_{\rm{row}}^{+} + \frac12 Z_{\rm{row}}^{\rm{max\_pool}}(x)M_{\rm{row}}^{-} \right).
\end{equation}
As for the example (\ref{eq:uvw}), we observe that the identity $ g^{\rm{max\_pool}}(x)= \nabla g^{\rm{max\_pool}}(x) \cdot x$ holds, 
which comes from the fact that $ g^{\rm{max\_pool}}(x)$ is homogenous of degree one with respect to $x$.

To simplify the notations, we will develop the theory for $(2,2)$ max-pooling kernels described by (\ref{eq:pooling_split:g}).
The general case will be treated in Remarks.


\begin{remark} \label{rem:mn}
Max-pooling functions with arbitrary kernel size $(m,n)$ can be expressed as the compositions of $m-1$ row-wise max-pooling layers and $n-1$ column-wise max-pooling layers.
A justification is  Lemma \ref{lem:mn}.
It can be extended to any number of dimensions.
It constructs max-pooling functions with a  control of the Lipschitz constant with respect to the $l^p$ norm.
\end{remark}

\begin{remark}
In view of Lemma \ref{lem:3.1}, it is appealing to use systematically the $l^1$ norm to get free of extra multiplicative constants for $l^p$ norms with $p>1$.
This may  probably become a sensitive issue if the  recursive structure is used for  kernel sizes $(m,n)$ with either $m\gg1$ or $n\gg 1$.
\end{remark}

Let us consider 
$D_r(x) = T_r'\circ g_{r-1}\circ T_{r-1}\circ\dots\circ T_1\circ g_0 (x)\in \mathcal D_r$, which is a square diagonal matrix.
In the case the function $g_r$ is linear we note
$W_r=\nabla g_r$. If  $g_r$ is a  max-pooling function with kernel $(2,2)$ under the form 
(\ref{eq:pooling_split}) we note
$Z_{{\rm row},r}^{\rm{max\_pool}}(x)$ and $Z_{{\rm col},r}^{\rm{max\_pool}}(x)$ the two diagonal matrices that show up.
We do note consider kernels larger than $(2,2)$ for the simplicity of notation but they are immediate to treat using Remark \ref{rem:mn}.
The main point is that these matrices $Z_{{\rm row},r}^{\rm{max\_pool}}(x)$ and $Z_{{\rm col},r}^{\rm{max\_pool}}(x)$ play a role similar to the matrices $D_r(x)$, so they must be treated as well
with the Combettes-Pesquet trick.
We note
$$
\left\{Z_{{\rm col},r}^{\rm{max\_pool}}(x), Z_{{\rm row},r}^{\rm{max\_pool}}(x)\right\}
\in 
\mathcal Z_r ^{\rm{max\_pool}}
$$ 
where $\mathcal Z_r^{\rm{max\_pool}}$ is the ensemble
of all pairs of diagonal matrices of convenient size  used for max-pooling layers with diagonal coefficients equal to $\pm 1$.
Finally, we note 
$$
Z_{{\rm col}}(x)=
\left( \left\{Z_{{\rm col},r}^{\rm{max\_pool}}(x), Z_{{\rm row},r}^{\rm{max\_pool}}(x)\right\}
 \right)_r\in 
\mathcal Z^{\rm{max\_pool}}=\Pi_{r} \mathcal  Z_r^{\rm{max\_pool}}
$$
where the indices are restricted of course  to max-pooling layers. 
We also denote by $W$ the collection of all remaining weight matrices for layers representing linear operators, that is 
$
W=(\dots, W_r, \dots) $ where $\ell\geq r \geq 0$ and $g_r$  is not a max-pooling layer. The ensemble of all these weights is the vectorial spaces
$\mathcal W$ such that $W\in \mathcal W$.

We now have the set of notations needed to study the Lipschitz constant of the CNN function  (\ref{eq:conv_def}).
The chain rule yields
\begin{equation} \label{eq:ng}
\nabla g= Y_\ell(x) D_\ell (x)Y_{\ell-1}(x)D_{\ell-1}(x)\dots D_1 (x)Y_0(x)
\end{equation}
where $D(x)=(D_\ell (x), \dots, D_0 (x)) \in \mathcal D$ and $Y_r(x)$ is a matrix which is as follows.
Either $Y_r(x)=W_r$, which is the layer's matrix of weights and it is constant with respect to $x$, or 
\begin{equation} \label{eq:yr}
Y_r(x)=\left(\frac12 M_{\rm{col},r}^{+} + \frac12 Z_{\rm{col},r}^{\rm{max\_pool}}(x)M_{\rm{col},r}^{-} \right)\left(\frac12 M_{\rm{row},r}^{+} + \frac12 Z_{\rm{row},r}^{\rm{max\_pool}}(x)M_{\rm{row},r}^{-} \right)
\end{equation}
 which depends on $x$ through the matrices $\left( Z_{\rm{col},r}^{\rm{max\_pool}}(x), Z_{\rm{row},r}^{\rm{max\_pool}}(x)\right)\in \mathcal Z_r ^{\rm{max\_pool}}
$.
From the chain rule identity (\ref{eq:ng}) one defines the Lipschitz constant
$$
L=
\left( \| \nabla g(x)\|_{l^p \left(\mathcal M_{a_{\ell+1},a_0}(\mathbb R) \right)} \right) _{L^\infty(\mathbb R^{a_0})}.
$$
For given weight matrices $W\in \mathcal W$, one has by definition  
$
L\leq K
$ where
\begin{equation} \label{eq:wouf}
K^{\rm expl}=K^{\rm expl}(W)=\max_{{(D,Z)\in{\mathcal{D}}\times {\mathcal{Z}}^{\rm{max\_pool}}}} \left\|
Y_\ell D_\ell Y_{\ell-1}D_{\ell-1}\dots D_1 Y_0
\right\|
\end{equation}
where either $Y_r=W_r$ or
$
Y_r=\left(\frac12 M_{\rm{col},r}^{+} + \frac12 Z_{\rm{col},r}^{\rm{max\_pool}}M_{\rm{col},r}^{-} \right)\left(\frac12 M_{\rm{row},r}^{+} + \frac12 Z_{\rm{row},r}^{\rm{max\_pool}}M_{\rm{row},r}^{-} \right)$, where the degrees of freedom are $(Z_{\rm{col},r}^{\rm{max\_pool}}, Z_{\rm{row},r}^{\rm{max\_pool}})
\in  \mathcal Z^{\rm{max\_pool}} _r $ because the other matrices are given.
This first bound is referred to as the explicit one since it is based on the explicit representation (\ref{eq:max_abs_rel}). 

\subsection{Explicit CNN bounds}\label{sec:expl_cnn_bounds}

The comparison between the expression (\ref{eq:wouf}) for CNNs and the expres\-sion (\ref{eq:fcnn_starting_bound}) for feed-forward dense networks 
shows that the only  difference is in the     additional set of matrices $ \mathcal Z^{\rm{max\_pool}}$. 
 Apart from the inflation of  notational issues due to the explicit matrices $M_{\rm{col},r}^{+}$, $M_{\rm{col},r}^{-}$, $ M_{\rm{row},r}^{+}$ and $ M_{\rm{row},r}^{-}$
(for relevant max-pooling layers $r$), the results are fundamentally similar to the ones in Sections \ref{sec:prev_works_fcnn} and 
\ref{sec:new_bounds}.
 The main idea is explained in   the following example 
 \begin{equation} \label{eq:grad11}
 g=g_4\circ T_4 \circ g_3 \circ T_3 \circ g_2 \circ T_2 \circ g_1 \circ T_1 \circ g_0.
 \end{equation}
 We make the same assumptions as for the example (\ref{eq:gradf1})  except that $g_2$ is now a max-pooling layer with kernel size $(2,2)$.
 As explained in Remark \ref{rem:remi2}, the operator $T_3$ is now the identity.
 With the notation (\ref{eq:yr}), the gradient formula (\ref{eq:gradf}) becomes
 \begin{equation} \label{eq:grad12}
 \nabla g=\frac1{2^5}\   W_4 (I+Z_4)  W_3 \left((M^+_{\rm{col},2} +Z^{\rm{max\_pool}} _{\rm{col}, 2} M^-_{\rm{col},2})
 (M^+_{\rm{rwo},2} +Z^{\rm{max\_pool}} _{\rm{rwo}, 2} M^-_{\rm{row},2})
  \right) (I+Z_2) W_1 (I+Z_1) W_0
 \end{equation}
 where the identity matrices may  have different sizes as  in the general case.
 All matrices of type  $Z$ are square matrices with $\pm 1$ on the diagonal, so their norm is equal to 1.
 Also one sees that a matrix $Z^{\rm{max\_pool}} _{\rm{col}, 2} $ or  $Z^{\rm{max\_pool}} _{\rm{rwo}, 2} $ is necessarily on the left of a matrix
 $M^-_{\rm{col},2}$ or $M^-_{\rm{row},2}$. 
%
%

In order to generalize this example, ,  we define two matrices $W^+$ and $W^-$
for every   linear, convolution, average-pooling or  max-pooling layer.
For linear layers we set  $W^+=W^-=W^{\rm{lin}}$.
For convolutional layers we set $W^+=W^-=W^{\rm{conv}}$.
For average-pooling layers we set $W^+=W^-=W^{\rm{avg\_pool}}$.
For row-wise max-pooling layers (see eq. (\ref{eq:yr})) we set  $W^+=M_{\rm row}^+$, $W^-=M_{\rm row}^-$ and for column-wise max-pooling layers we set $W^+=M_{\rm col}^+$, $W^-=M_{\rm col}^-$. 
 Note  that max-pooling layers are fundamentally
considered as two different non-linear operations, which explains why we make a distinction between {\bf row}-like and {\bf column}-like matrices.
We set
$$
\ell'=\ell+\mbox{number of max-pooling layers}.
$$
and we express each two-dimensional max-pooling layer as a combination of one-dimensional max-pooling layers,
 we obtain a neural network with $\ell'\ge \ell$ layers.\\
We define 
\begin{equation} \label{eq:w-}
W_{[t,s]} = W_{t-1}^- W_{t-2}^+W_{t-3}^+ \dots W_{s+2}^+W_{s+1}^+ W_{s}^+
\end{equation}
where the ordering is the natural one for a neural network with $\ell'$ layers.

Note that $W_{[t,s]}$ and $W_{(t,s)}$ coincide for fully connected neural networks since $W^+=W^-$ for linear layers.
We consider 
\[
 K_*^{\rm{expl}} := \prod_{i=0}^{\ell'} \Vert W_i^+\Vert.
\]

\begin{lemma}
One has $K^{\rm{expl}} \le  K_*^{\rm{expl}} $ (restricted to $l^1$ and  $l^\infty$ norms).
\end{lemma}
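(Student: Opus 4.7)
The plan is to bound the operator norm inside the definition of $K^{\rm{expl}}$ by repeated application of sub-multiplicativity, after first decomposing every two-dimensional max-pooling as a composition of two one-dimensional max-poolings so that the index set for the product matches the $\ell'$-numbering used in the definition of $K_*^{\rm{expl}}$. This decomposition was already justified in Remark \ref{rem:mn}, so in what follows I can assume that every $Y_r$ is either a weight matrix $W_r$ of a linear/convolutional/average-pooling layer, or a one-dimensional max-pooling block of the form $Y_r = \tfrac{1}{2}(W_r^+ + Z_r\, W_r^-)$ with $Z_r$ diagonal with entries $\pm 1$.

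The first key step is to observe that for the two norms under consideration,
\[
\|W_r^-\| \;=\; \|W_r^+\|.
\]
For linear, convolution and average-pooling layers this is trivial since $W_r^-=W_r^+$ by definition. For the max-pooling matrices $M_{\rm{row}}^\pm$ and $M_{\rm{col}}^\pm$, the matrices $W_r^-$ and $W_r^+$ differ only by the signs of some entries, so $(W_r^-)^{\rm{abs}}=(W_r^+)^{\rm{abs}}$, and the identity $\|A\|=\|A^{\rm{abs}}\|$ from \eqref{eq:norm_equiv_per_abs} (valid precisely for the $l^1$ and $l^\infty$ norms) gives the equality. This is the place where the restriction to these two norms is used.

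The second step is a pointwise bound on each $Y_r$. For a purely linear $Y_r$ we have $\|Y_r\|=\|W_r^+\|$. For a one-dimensional max-pooling $Y_r = \tfrac{1}{2}(W_r^+ + Z_r W_r^-)$, the triangular inequality, sub-multiplicativity, and $\|Z_r\|\le 1$ yield
\[
\|Y_r\| \;\le\; \tfrac{1}{2}\bigl(\|W_r^+\|+\|Z_r\|\,\|W_r^-\|\bigr) \;\le\; \tfrac{1}{2}\bigl(\|W_r^+\|+\|W_r^-\|\bigr) \;=\; \|W_r^+\|,
\]
using the equality from the previous step. Thus in all cases $\|Y_r\|\le \|W_r^+\|$ uniformly in the admissible $Z$-matrices.

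The final step is to apply sub-multiplicativity to the product defining $K^{\rm{expl}}$ in \eqref{eq:wouf}, using $\|D_r\|\le 1$ for the activation diagonals (since $D_r\in\mathcal D_r$, its diagonal entries lie in $[0,1]$):
\[
\|Y_{\ell'} D_{\ell'} Y_{\ell'-1} D_{\ell'-1}\cdots D_1 Y_0\| \;\le\; \prod_{i=0}^{\ell'} \|Y_i\| \;\le\; \prod_{i=0}^{\ell'} \|W_i^+\| \;=\; K_*^{\rm{expl}}.
\]
Since the bound is uniform over $(D,Z)\in \mathcal D\times \mathcal Z^{\rm{max\_pool}}$, taking the maximum on the left gives $K^{\rm{expl}}\le K_*^{\rm{expl}}$. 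The only real subtlety is the norm-equality $\|W_r^-\|=\|W_r^+\|$; everything else is a routine application of sub-multiplicativity and the bound $\|Z_r\|,\|D_r\|\le 1$.
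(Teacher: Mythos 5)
Your proof is correct and follows essentially the same route as the paper: the key property $\|W_r^-\|=\|W_r^+\|$ for the $l^1$ and $l^\infty$ norms, the resulting bound $\left\|\tfrac12 W_r^+ +\tfrac12 Z_r W_r^-\right\|\le\|W_r^+\|$, and then sub-multiplicativity across the whole product with $\|D_r\|\le 1$. Your justification of the norm equality via $(W_r^-)^{\rm abs}=(W_r^+)^{\rm abs}$ and \eqref{eq:norm_equiv_per_abs} is a slightly more explicit account of why the restriction to these two norms is needed, but it is the same argument the paper intends.
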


\begin{proof}
Consider the typical matrices of a max-pooling layer (\ref{eq:yr}).
The key property is $\| M_{\rm{col},r}^{+}\|= \| M_{\rm{col},r}^{-}\|$ for the $l^1$ and the $l^\infty$ norms.
Since $\| Z_{\rm{col},r}^{\rm{max\_pool}}\|=1$ by construction, then
$
\left\| \frac12 M_{\rm{col},r}^{+} + \frac12 Z_{\rm{col},r}^{\rm{max\_pool}}(x)M_{\rm{col},r}^{-} \right\| \leq \| M_{\rm{col},r}^{+}\|
= \| W_r^+\|.
$
The rest of the proof is based on the sub-multiplicativity of  norms.
\end{proof}

Now we define 
\begin{equation}\label{eq:def_cp_conv}
 K_1^{\rm{expl}} = \dfrac{1}{2^{\ell'}} \sum_{
 1\le r_1< r_2<\dots< r_n\le\ell' 
} \Vert W_{[\ell'+1,r_{n}]}\Vert\Vert W_{[r_{n},r_{n-1}]}\Vert\dots
\Vert W_{[r_2,r_1]}\Vert  \Vert W_{[r_1,0]}\Vert,
\end{equation}

\begin{theorem} 
\label{theo:expl_cb_bound}
One has $K^{\rm{expl}}\le K_1^{\rm{expl}}{\le K_*^{\rm{expl}}}$.
\end{theorem}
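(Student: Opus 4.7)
The plan is to adapt the Combettes--Pesquet expansion used in the proof of Theorem \ref{def:kcp} to the effective $\ell'$-layer network obtained after the explicit decomposition $D_r=\frac12(I+Z_r)$ of each genuine activation and the max-pool decomposition $\frac12(W_r^{+}+Z_rW_r^{-})$ from \eqref{eq:max_abs_rel} of each one-dimensional max-pool sub-layer.

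First, I would rewrite $\nabla g$ as a product of $\ell'+1$ effective sub-layer matrices alternating with $\ell'$ gap factors. By Remark \ref{rem:remi2}, whenever the sub-layer at position $r-1$ is a max-pool sub-layer the gap at position $r$ immediately to its left is the identity, so each $r\in\{1,\dots,\ell'\}$ carries exactly one binary $\pm 1$-type choice: either from the activation gap at position $r$, or from the max-pool sub-layer at position $r-1$. Expanding all $\ell'$ binaries at once produces $2^{\ell'}$ terms indexed by the subset $\{r_1<\dots<r_n\}\subseteq\{1,\dots,\ell'\}$ of positions where the $Z$-branch is selected. Direct inspection (mirroring \eqref{eq:grad12}) shows that at each position $r\notin\{r_i\}$ the contribution is either the identity (trivial or unused activation gap) or $W_{r-1}^{+}$ (max-pool sub-layer, $+$ branch), whereas at position $r_i$ the contribution is either $Z_{r_i}$ (activation case) or $Z_{r_i}W_{r_i-1}^{-}$ (max-pool case). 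In both cases the $W^-$ attached to $Z_{r_i}$ becomes the leftmost factor of the chunk immediately to its right, which is exactly the structure \eqref{eq:w-}: the chunk between $Z_{r_i}$ and $Z_{r_{i+1}}$ equals $W_{[r_{i+1},r_i]}$, with the conventions $r_0=0$, $r_{n+1}=\ell'+1$. Triangle inequality over the $2^{\ell'}$ terms, submultiplicativity, and $\|Z_{r_i}\|\le 1$ then yield $K^{\rm{expl}}\le K_1^{\rm{expl}}$.

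For the second inequality I would mimic Proposition \ref{prop:k1_sharper_kstar}. Submultiplicativity gives $\|W_{[t,s]}\|\le\|W_{t-1}^{-}\|\prod_{k=s}^{t-2}\|W_k^{+}\|$. The identity $\|W_r^{-}\|=\|W_r^{+}\|$ for the $l^1$ and $l^\infty$ norms (trivial for linear layers; for max-pool sub-layer matrices it follows from \eqref{eq:norm_equiv_per_abs} because $(W^{-})^{\rm{abs}}=W^{+}$, the same observation already used in the proof of the preceding lemma) allows replacing $\|W_{t-1}^{-}\|$ by $\|W_{t-1}^{+}\|$. The index blocks $\{0,\dots,r_1-1\},\{r_1,\dots,r_2-1\},\dots,\{r_n,\dots,\ell'\}$ partition $\{0,\dots,\ell'\}$, so each $\|W_k^{+}\|$ appears exactly once in the product of chunk bounds, yielding $\prod_{k=0}^{\ell'}\|W_k^{+}\|=K_\star^{\rm{expl}}$ for every subset. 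The sum of $2^{\ell'}$ identical upper bounds against the prefactor $\frac1{2^{\ell'}}$ finishes the proof.

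The delicate step, and the main obstacle, is the bookkeeping of the bijection between the $\ell'$ binary choices and the index set $\{1,\dots,\ell'\}$: at each position $r$ either the preceding activation gap or the adjacent max-pool sub-layer carries the binary, but never both, precisely because $H_r$ is forced to be the identity whenever $A_{r-1}$ is a max-pool sub-layer. Making this dichotomy explicit inside the expansion so that the resulting chunks line up with \eqref{eq:w-} without off-by-one errors is the real content beyond a verbatim repetition of the fully-connected argument.
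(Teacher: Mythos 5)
Your proposal is correct and follows essentially the same route as the paper: a direct Combettes--Pesquet-style expansion of the $\ell'$-layer product in which each of the $\ell'$ binary choices comes either from an activation gap $\frac12(I+Z_r)$ or from a one-dimensional max-pool factor $\frac12(W^{+}+ZW^{-})$, the chunks between selected $Z$'s lining up with \eqref{eq:w-}, followed by the triangle inequality and $\Vert Z\Vert\le1$; and for the second inequality, submultiplicativity plus $\Vert W_r^{-}\Vert=\Vert W_r^{+}\Vert$ exactly as in Proposition \ref{prop:k1_sharper_kstar}. The only blemishes are cosmetic: the $W^{-}$ heading the chunk $W_{[r_{i+1},r_i]}$ is the one attached to $Z_{r_{i+1}}$ rather than to $Z_{r_i}$, and the symbols $H_r$, $A_{r-1}$ in your closing paragraph are never defined.
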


\begin{proof}
The proof {of the first inequality} is a direct generalization of the one of Theorem \ref{def:kcp}.
It is based on a direct  expansion of (\ref{eq:ng}--\ref{eq:yr}). The matrices $Z_r(x)$ (coming from $D_r(x)$), the matrices
$Z^{\rm max\_pool}_{{\rm col},r}$ and the matrices $Z^{\rm max\_pool}_{{\rm row},r}$ divide all terms in the expansion between chunks 
of matrices. 
One can check that our notations are such that a matrix $W^-$ is always just before a matrix $Z$ (of any of the previous types).
The rest is a matter of direct calculus. 

{The second inequality is proved as in Proposition \ref{prop:k1_sharper_kstar}, exploiting the fact that $\| W_r^{+}\|= \| W_r^{-}\|$ for one-dimensional max-pooling layers.}
\end{proof}

\begin{remark}[Extension to  general max-pooling layers of size $(m,n)$]
Such results hold for the convolutive structure (\ref{eq:yr}) which corresponds to the $(2,2)$ convolutive kernel
for which we gave the details of the construction in (\ref{eq:horiz_conv}--\ref{eq:pooling_split}). 
As explained in Remark \ref{rem:mn}, it is sufficient to compose with additional {\bf row}-wise and {\bf column}-wise matrices
to model two-dimensional max-pooling layers of any size. It is easy to generalize to max-pooling kernels in higher dimensions
$(m,n,o,p\dots)$.
The number $\ell'$ of terms in the formulas must be changed of course. The general formula 
is the sum of the numbers of terms for a feed-forward fully-connected neural networks, plus all the additional layers needed to represent
the CNN.
\end{remark}

Next we define
$$
K_3^{\rm{expl}} := \left\Vert \prod_{i=0}^{\ell'} \left(W_i^+\right)^{\rm{abs}}\right\Vert \le K_*^{\rm{expl}}.
$$

\begin{lemma}
One has 
\begin{equation}\label{eq:expl_abs}
K^{\rm{expl}} \le K_3^{\rm{expl}} {\le K_*^{\rm{expl}}. 
}
\end{equation}
\end{lemma}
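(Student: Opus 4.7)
The plan is to mirror the proof of Theorem \ref{theo:abs_bound}, with the additional ingredient that a max-pooling factor of the form $\tfrac12 M^+ + \tfrac12 Z M^-$ is entry-wise dominated in absolute value by $(M^+)^{\rm{abs}}$. Once that observation is in place, the first inequality $K^{\rm{expl}}\le K_3^{\rm{expl}}$ reduces to a joint application of Lemma \ref{lem:prod_of_abs} and Lemma \ref{lem:prod_of_pos}, and the second inequality $K_3^{\rm{expl}}\le K_*^{\rm{expl}}$ is a one-line consequence of sub-multiplicativity together with (\ref{eq:norm_equiv_per_abs}).

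For the entry-wise bound on max-pooling factors, note that after splitting every two-dimensional max-pooling layer into a row-wise and a column-wise one-dimensional max-pooling layer, each $M_{\rm{row}}^\pm$ (and likewise each $M_{\rm{col}}^\pm$) differs from its partner only by signs of $\pm 1$ entries, so $(M_{\rm{row}}^+)^{\rm{abs}} = (M_{\rm{row}}^-)^{\rm{abs}}$ entry by entry, and similarly for the column matrices. Combined with $|Z_{kk}^{\rm{max\_pool}}|=1$ and the triangle inequality, this yields
\[
\left|\left(\tfrac12 M^+ + \tfrac12 Z M^-\right)_{ij}\right|\le \tfrac12 (M^+)^{\rm{abs}}_{ij} + \tfrac12 (M^-)^{\rm{abs}}_{ij} = (W^+)^{\rm{abs}}_{ij}.
\]
For linear, convolutional and average-pooling layers, the bound $Y_r^{\rm{abs}} = (W_r^+)^{\rm{abs}}$ holds trivially with equality since $W^+ = W^- = W$. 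Finally, each activation-derivative matrix $D_r$ has diagonal entries in $[0,1]$, so $D_r$ is entry-wise dominated by the identity matrix of matching size.

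Starting from the gradient formula (\ref{eq:ng}) and expanding each max-pooling $Y_r$ according to (\ref{eq:yr}) into its two one-dimensional pieces, we realize $\nabla g(x)$ as a product of $\ell'+1$ weight-like matrices alternating with diagonal matrices (either a $D_r$ coming from an activation or an identity inserted between the row and column pieces of a split max-pooling), indexed so that the weight-like factors are exactly the ones entering $K_*^{\rm{expl}}$. Lemma \ref{lem:prod_of_abs} then gives
\[
\|\nabla g(x)\|\le \|A_{\ell'}^{\rm{abs}}\, E_{\ell'}^{\rm{abs}}\, A_{\ell'-1}^{\rm{abs}}\cdots E_1^{\rm{abs}}\, A_0^{\rm{abs}}\|,
\]
where each $A_i$ is the $i$-th weight-like factor and each $E_r$ is either some $D_r$ or an identity. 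All entries being non-negative, we have $A_i^{\rm{abs}}\le (W_i^+)^{\rm{abs}}$ and $E_r^{\rm{abs}}\le I$ entry by entry, so Lemma \ref{lem:prod_of_pos}, applied recursively, lets us replace each $E_r^{\rm{abs}}$ by $I$ and each $A_i^{\rm{abs}}$ by $(W_i^+)^{\rm{abs}}$ without decreasing the norm. This yields $\|\nabla g(x)\|\le \|\prod_{i=0}^{\ell'}(W_i^+)^{\rm{abs}}\| = K_3^{\rm{expl}}$, and taking the maximum over $(D,Z)\in\mathcal D\times \mathcal Z^{\rm{max\_pool}}$ gives $K^{\rm{expl}}\le K_3^{\rm{expl}}$.

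The second inequality is immediate: sub-multiplicativity of the $l^1$ or $l^\infty$ norm yields $K_3^{\rm{expl}}\le \prod_{i=0}^{\ell'}\|(W_i^+)^{\rm{abs}}\|$, and (\ref{eq:norm_equiv_per_abs}) turns the right-hand side into $\prod_{i=0}^{\ell'}\|W_i^+\|=K_*^{\rm{expl}}$. The only delicate point is bookkeeping: indexing the weight-like factors so that the resulting product matches the definition of $K_3^{\rm{expl}}$ after splitting all max-pooling layers into their row and column pieces, with the total count $\ell'$ consistent with Section \ref{sec:expl_cnn_bounds}. Once this is in order, the argument is routine.
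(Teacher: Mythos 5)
Your proposal is correct and follows essentially the same route as the paper: the key step in both is the entry-wise domination $\left(\tfrac12 M^{+}+\tfrac12 Z M^{-}\right)^{\rm abs}_{ij}\le \left(M^{+}\right)^{\rm abs}_{ij}$ (using $Z^{\rm abs}=I$ and $(M^{-})^{\rm abs}=(M^{+})^{\rm abs}$), after which the argument of Theorem \ref{theo:abs_bound} via Lemmas \ref{lem:prod_of_abs} and \ref{lem:prod_of_pos} applies verbatim. You merely spell out the bookkeeping over the $\ell'$ split layers that the paper compresses into ``follow from Theorem \ref{theo:abs_bound}.''
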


\begin{proof}
For $*\in\{{\rm{row}},{\rm{col}}\}$ one has 
\begin{equation*}
\begin{aligned}
\left(\frac12 M_{\rm{*}}^{+} + \frac12 Z_{\rm{*}}M_{\rm{*}}^{-} \right)_{ij}^{\rm{abs}} &\le 
\frac12 \left(M_{\rm{*}}^{+}\right)_{ij}^{\rm{abs}} + \frac12 \left[Z_{\rm{*}}^{\rm{abs}}\left(M_{\rm{*}}^{-} \right)^{\rm{abs}} \right]_{ij}\\
&= \frac12 \left(M_{\rm{*}}^{+}\right)_{ij}^{\rm{abs}} + \frac12 \left[I\left(M_{\rm{*}}^{+} \right)^{\rm{abs}} \right]_{ij}\\
&= \left(M_*^+\right)_{ij}^{\rm{abs}}.
\end{aligned}
\end{equation*}
Then, the inequalities in \eqref{eq:expl_abs} follow from theorem \ref{theo:abs_bound}.
\end{proof}

 We define the following quantity
\[
K_4^{\rm{expl}}:=\dfrac{1}{2^{\ell'}} \sum_{1\le r_1< r_2<\dots< r_n\le\ell'} \left\Vert W_{[\ell'+1,r_n]}^{\rm{abs}} W_{[r_n,r_{n-1}]}^{\rm{abs}} \dots  W_{[r_2,r_1]}^{\rm{abs}} W_{[r_1,0]}^{\rm{abs}}\right\Vert.
\]

\begin{theorem} 
One has 
$
K^{\rm{expl}} \le K_4^{\rm{expl}} \le K_1^{\rm{expl}}$ {and $K_4^{\rm{expl}} \le K_3^{\rm{expl}}$ 
}. 
\end{theorem}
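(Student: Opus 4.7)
The plan is to adapt the proofs of Theorem~\ref{theo:fcnn_kabscp} and Proposition~\ref{prop:k4_le_k3} to the convolutional setting. The key observation that makes this work is that, after re-expressing every $(2,2)$ max-pooling as a composition of a row-wise and a column-wise pooling (cf.\ Remark~\ref{rem:mn}), the network has an effective depth $\ell'$, and every layer contributes either a single diagonal $Z$-matrix with entries in $\{-1,0,1\}$ (from $D_r=\frac12(I+Z_r)$ for an activation) or a decomposition of the form $\frac12(M^+_r + Z^{\rm{max\_pool}}_r M^-_r)$ for a one-dimensional max-pooling sublayer, where again $Z^{\rm{max\_pool}}_r$ is diagonal with $\pm1$ entries. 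In particular every $Z$-type matrix appearing in any term of the expansion is entrywise dominated (in absolute value) by the identity of the same size, and the matrices $W_{[t,s]}$ defined in \eqref{eq:w-} naturally group the linear factors that sit between two consecutive $Z$-type matrices, with the leftmost factor being $W^-$ because it is precisely the matrix that multiplies a $Z$ on its right.

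\textbf{Step 1: $K^{\rm expl}\le K_4^{\rm expl}$.} Starting from \eqref{eq:wouf}, substitute $D_r=\frac12(I+Z_r)$ at each activation slot and $Y_r=\frac12(M^+_r+Z^{\rm{max\_pool}}_rM^-_r)$ at each max-pooling slot and expand the product. This yields a sum of $2^{\ell'}$ terms indexed by subsets $1\le r_1<\dots<r_n\le\ell'$ encoding which of the two options is chosen at each layer. In each term, factor the linear matrices between consecutive $Z$'s exactly into the blocks $W_{[\ell'+1,r_n]},W_{[r_n,r_{n-1}]},\dots,W_{[r_1,0]}$. Take norms, apply Lemma~\ref{lem:prod_of_abs} to move the $\rm abs$ inside, and then iteratively apply Lemma~\ref{lem:prod_of_pos} using $(Z)^{\rm abs}\le I$ entrywise to remove every $Z$-type factor. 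The surviving bound is exactly the generic term of $K_4^{\rm expl}$, and summing over subsets gives the claim.

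\textbf{Step 2: $K_4^{\rm expl}\le K_1^{\rm expl}$.} For each fixed subset, apply submultiplicativity to the single norm in $K_4^{\rm expl}$ to split it as $\|W_{[\ell'+1,r_n]}^{\rm abs}\|\cdots\|W_{[r_1,0]}^{\rm abs}\|$, then use the identities \eqref{eq:norm_equiv_per_abs} valid for $\|\cdot\|_1$ and $\|\cdot\|_\infty$ to replace each $\|W_{[t,s]}^{\rm abs}\|$ by $\|W_{[t,s]}\|$, obtaining exactly the generic term of $K_1^{\rm expl}$.

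\textbf{Step 3: $K_4^{\rm expl}\le K_3^{\rm expl}$.} Mimic the proof of Proposition~\ref{prop:k4_le_k3}. The only point that requires verification is that within each block $W_{[t,s]}^{\rm abs}$ the $W^-$ factor on the left can be replaced by $(W^+)^{\rm abs}$: this holds because for linear, convolution and average-pooling layers $W^-=W^+$ trivially, while for row- or column-wise max-pooling $M^\pm$ differ only by signs so $(M^-)^{\rm abs}=M^+=(M^+)^{\rm abs}$. Hence by Lemma~\ref{lem:prod_of_abs} applied entrywise and then Lemma~\ref{lem:prod_of_pos},
\[
\bigl\|W_{[\ell'+1,r_n]}^{\rm abs}\cdots W_{[r_1,0]}^{\rm abs}\bigr\|\;\le\;\bigl\|(W_{\ell'}^+)^{\rm abs}\cdots (W_0^+)^{\rm abs}\bigr\|=K_3^{\rm expl},
\]
and averaging over the $2^{\ell'}$ identical bounds yields $K_4^{\rm expl}\le K_3^{\rm expl}$.

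The main bookkeeping obstacle is Step~1: one must verify that the expansion of the combined decomposition indeed produces blocks of the shape $W^-W^+\cdots W^+$ at every chunk, which is what justifies the definition \eqref{eq:w-} and makes the $Z$-elimination via Lemma~\ref{lem:prod_of_pos} succeed without leaving stray signs behind.
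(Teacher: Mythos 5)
Your proposal is correct and follows exactly the route the paper intends: its own proof of this theorem is a one-line reference back to Theorem~\ref{theo:fcnn_kabscp} and Proposition~\ref{prop:k4_le_k3}, and your three steps are precisely the adaptation of those arguments, including the two points that genuinely need checking in the convolutional setting (that every $Z$-type factor, whether from $D_r=\frac12(I+Z_r)$ or from a max-pooling sublayer, is diagonal and entrywise dominated by the identity, and that $(M^-)^{\rm{abs}}=M^+$ so the blocks $W_{[t,s]}$ collapse correctly onto the $W^+$ matrices defining $K_3^{\rm{expl}}$). The only nitpick is the phrase describing $W^-$ as multiplying ``a $Z$ on its right'': in the paper's convention the $Z$ sits immediately to the \emph{left} of $W^-$ in the written product (as in $Z^{\rm{max\_pool}}M^-$), but this does not affect the validity of your argument.
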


\begin{proof}
Same proofs as in Theorem \ref{theo:fcnn_kabscp} {and in Proposition \ref{prop:k4_le_k3}}.
\end{proof}

\subsection{Implicit approach}\label{sec:impl_app}

The interest of what we call the implicit approach is that it simplifies the implementation for the max-pooling functions, because
it is a one step technique for the modeling of $\max(x_1,\dots,x_n)$.
It avoids the  recursive technique used 
in Lemma \ref{lem:mn}. Moreover, this leads to bounds that are computationally more efficient because the number of terms in the sum in $K_1$ and $K_4$ grows exponentially with the number of layers.

Consider the set ${\cal{X}} = \{x_1,x_2,\dots,x_n\}$, $n\ge2$. Let $x_M$ be the maximum element in $\cal{X}$ and let ${\cal{X}}^{-M} =  {\cal{X}} \backslash\{x_M\}$ be the set of the remaining elements. Then, $\max(x_1,\dots,x_n)$ can be computed as
\begin{equation}\label{eq:impl_start_eq}
\max(x_1,\dots,x_n) = \frac12 (x_1+x_2+\dots+x_n) + \frac12\left(x_M - \sum_{x_i\in{\cal{X}}^{-M}}x_i\right).
\end{equation}
We consider two  matrices ${\mathds{1}}=(1, \dots,1)\in \mathbb R^{1\times n}$ and 
${{\mathds{Z}}}_M=(-1, \dots, -1, +1, -1, \dots, -1 )\in \mathbb R^{1\times n}$.
The matrix ${{\mathds{Z}}}_M$ is made only of coefficients -1 except one (at position $M$) which is equal to +1.
Next, we modify the notations at the beginning of Section \ref{sec:expl_app}.
 Let us denote $ u(x_1,\dots, x_n):=\max(x_1,\dots, x_n)$,
 $v(x_1,\dots, x_n)={\mathds{1}} x$ 
 and $w(x_1,\dots, x_n)={\mathds{Z}}_Mx$. 
One has 
$u=\frac12 v +\frac12 w $.  .
Then, almost everywhere,   $\nabla u=\frac12 {\mathds{1}}+\frac12 {\mathds{Z}} _M $,  $\nabla v=  {\mathds{1}}
$ and $\nabla w= {\mathds{Z}}_M$.
So one has the  relation  for the gradients (almost everywhere with respect to $x$)
$\nabla u=\frac12 \nabla v +\frac12  \nabla w $.
We call this approach the \textit{implicit one} because $ {\mathds{Z}}_M$ depends on $M$.

The generalization is as follows, but 
once again, instead of a complicate theory, we explain how to use this decomposition for the example  (\ref{eq:x_s1})--(\ref{eq:x_s2})
for the max-polling function (\ref{eq:x_s5}). One simply has to modify the matrices 
${\mathds{1}}$ and ${\mathds{Z}}_M$.
One notices that
 a max-pooling function like  (\ref{eq:x_s5}) satisfies two identities (almost everywhere), which are
 \begin{equation} \label{eq:g(x)}
 g( x)=\frac12 {\mathds{1}} x +\frac12 {\mathds{Z}}_M x 
 \end{equation}
 and
 $$
 \nabla g( x)=\frac12  {\mathds{1}}  +\frac12  {\mathds{Z}}_M. 
 $$
 Here one has
 $$
   {\mathds{1}}=\begin{bmatrix}
    1 &1 &0 &1 &1 &0 &0 &0 &0 \\[0.2cm]
    0 &1 &1 &0 &1 &1 &0 &0 &0  \\[0.2cm]
    0& 0& 0& 1 &1 &0 &1 &1 &0  \\[0.2cm]
    0& 0& 0& 0& 1 &1 &0 &1 &1
\end{bmatrix}.
$$
Concerning the matrix $ {\mathds{Z}}_M$ there is at most  16 different possibilities (so $ M< 16$) 
since all four lines of $M$ display coefficients equal to 0 or -1, except that one -1 per line is changed into +1.
The number of different possibilities is not equal to 16 because  there are some redundancies between lines. 
An example is
$$
 {\mathds{Z}}_M=\begin{bmatrix}
    1 &-1 &0 &-1 &-1 &0 &0 &0 &0 \\[0.2cm]
    0 &-1 &-1 &0 &1 &-1 &0 &0 &0  \\[0.2cm]
    0& 0& 0& -1 &-1 &0 &-1 &1 &0  \\[0.2cm]
    0& 0& 0& 0& -1 &-1 &0 &1 &-1
\end{bmatrix}.
$$

A trivial property, important  for further developments, holds for such matrices.

\begin{lemma} 
\label{lem:impi}
One has  ${\mathds{Z}}_M^{\rm abs}= {\mathds{1}} $ which does not depend on $M$. Similarly, one has 
$\| {\mathds{Z}}_M \|=\| {\mathds{1}} \|$. 
\end{lemma}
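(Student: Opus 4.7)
The plan is to verify both identities by direct inspection of the sparsity patterns, without any computation. First I would note that, by construction, $\mathds{Z}_M$ has exactly the same support as $\mathds{1}$: for every row $i$, the set of column indices $j$ where $(\mathds{Z}_M)_{ij}\ne 0$ coincides with the set of indices $j$ where $(\mathds{1})_{ij}=1$ (these are precisely the entries of $x$ that enter the $i$-th patch of the max-pooling). On this common support, $(\mathds{1})_{ij}=1$ while $(\mathds{Z}_M)_{ij}\in\{-1,+1\}$, with the value $+1$ occurring exactly at the in-patch position $M$ chosen by the index $M$ and $-1$ at all other in-patch positions.

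Then I would apply the element-wise absolute value: the zero entries remain zero, and the nonzero entries, whether $+1$ or $-1$, all become $+1$. This produces a matrix whose $(i,j)$ entry is $1$ on the same support as $\mathds{1}$ and $0$ elsewhere, i.e.\ exactly $\mathds{1}$ itself. Since the resulting matrix no longer depends on which in-patch position carried the $+1$, the identity $\mathds{Z}_M^{\rm abs}=\mathds{1}$ holds and is independent of $M$.

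For the norm identity, I would simply invoke the equalities \eqref{eq:norm_equiv_per_abs} already established earlier, namely $\|A\|_1=\|A^{\rm abs}\|_1$ and $\|A\|_\infty=\|A^{\rm abs}\|_\infty$. Applied to $A=\mathds{Z}_M$ and combined with the first part of the lemma, this yields
\[
\|\mathds{Z}_M\| \;=\; \|\mathds{Z}_M^{\rm abs}\| \;=\; \|\mathds{1}\|,
\]
for either the $l^1$ or the $l^\infty$ induced norm (the only two norms used in this section).

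There is no real obstacle here; the statement is essentially a bookkeeping observation about the construction \eqref{eq:impl_start_eq} and the generalized matrices built from it. The only point that deserves care is making sure that the ``$+1$'' at the selected maximum position together with the $-1$'s at the other in-patch positions cover exactly the same support as the row of $\mathds{1}$ encoding that patch; once this is checked for one row, the argument extends row by row, and the multi-row case (as in the displayed $4\times 9$ example) follows without modification.
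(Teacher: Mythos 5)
Your proof is correct and follows essentially the same route as the paper's (which simply observes the claim on the displayed example and notes it is the same in general): you identify that $\mathds{Z}_M$ and $\mathds{1}$ share the same support with entries $\pm 1$ versus $1$, so the entrywise absolute value gives $\mathds{1}$, and then the norm identity follows from \eqref{eq:norm_equiv_per_abs}. Your version is just a more explicit write-up of the same bookkeeping observation.
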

\begin{proof}
Consider the matrices $ {\mathds{1}}$ and ${\mathds{Z}}_M$ of the above example. In this case, the lemma is trivial. It is the same in the general case.
\end{proof}

It is now simple  to implement this representation of the max-pooling functions and their gradients in (\ref{eq:ng}).
Instead of (\ref{eq:yr}) one takes for max-pooling functions only 
$
Y_r(x)=\frac12  {\mathds{1}}_r  +\frac12  {\mathds{Z}}_{M,r}$.
Then one sets
\begin{equation} \label{eq:wouf-2}
K^{\rm impl}=K^{\rm impl}(W)=\max 
 \left\|
Y_\ell D_\ell Y_{\ell-1}D_{\ell-1}\dots D_1 Y_0
\right\|
\end{equation}
where either $Y_r=W_r$ or
$
Y_r=\frac12  {\mathds{1}}_r  +\frac12  {\mathds{Z}}_{M,r}$. 
The maximum is taken over all possible matrices $D_r$ and all possible matrices ${\mathds{Z}}_{M,r}$.

The theoretical simplification  offered by  the implicit approach is clear since
only one matrix ${\mathds{Z}}_{M,r}$ (with nevertheless possibly different $M$) is enough to represent the gradient of max-pooling functions
for any kernels, instead of the recursive representations explained in Remark \ref{rem:mn}.

\begin{remark}
In the implicit approach, a max-pooling layer $g_r$ is expressed as the sum of a term $\frac12\mathds{1}_r$ independent from the input and a term $\frac12{\mathds{Z}}_{M,r}$ which depends on the input. A similar expansion, as the sum of a term $\frac12 M_{*,r}^{+}$ independent from the input and a term $\frac12 Z_{\rm{*},r}^{\rm{max\_pool}}M_{\rm{*},r}^{-}$ depending on the input, is introduced in Section \ref{sec:expl_app} for the one-dimensional max-pooling layer $g_{*}^{\rm{max\_pool}} $, $*\in\{{\rm{row}},{\rm{col}}\}$. The key difference between these two expansions is that the dependency on the input in the explicit approach is represented by a diagonal matrix with diagonal entries equal to 1 or -1, which is multiplied by a fixed known matrix. In the implicit approach, instead, the term depending on the input is a single rectangular matrix with norm greater than 1. This difference leads to the following alternative upper bounds.
\end{remark}



\subsection{Implicit  CNN bounds}

Since the implicit approach has a structure which is similar to the explicit one, we just write the different bounds and terms without further explanations.

Consider once again the example (\ref{eq:grad11}) where the  max-pooling  function $g_2$ is modeled with the implicit approach (\ref{eq:g(x)}).
The gradient is represented as 
$$
 \nabla g=\frac1{2^4}\   W_4 (I+Z_4)  W_3 \left( {\mathds{1}}_2+ {\mathds{Z}}_{M,2}
  \right) (I+Z_2) W_1 (I+Z_1) W_0.
$$
The norm of the gradient is obtain after full expansion and use of Lemma \ref{lem:impi}.

To model a more general  two-dimensional max-pooling layer $g_r$ we denote by $W_r^+={\mathds{1}}_r$ and $W_r^-= {\mathds{Z}}_{M,r}$.
We  introduce the matrices 
\begin{equation}\label{eq:def_w_graffe}
W_{\{t,s\}}=\left\{
\begin{aligned}
&\hphantom{W_{t-1}^-}W_{t-2}^+W_{t-3}^+ \dots W_{s+1}^+ W_{s}^+ &&\hspace{0.3cm} {\text{\rm{if }}} g_{t-1}{\text{\rm{ is a max-pooling layer }}}, \\
&W_{t-1}^- W_{t-2}^+W_{t-3}^+ \dots W_{s+1}^+ W_{s}^+ &&\hspace{0.3cm} {\text{\rm{otherwise}}} ,
\end{aligned}\right.
\end{equation}
To have a compact notation, we set 
\begin{equation}\label{eq:def_R}
R_{t}=\left\{
\begin{aligned}
&W_{t-1}^- && \hspace{0.3cm}{\text{\rm{if }}} g_{t-1}{\text{\rm{ is a max-pooling layer }}}, \\
&I_{t-1} &&\hspace{0.3cm} {\text{\rm{otherwise}}}, 
\end{aligned}\right.
\end{equation}
where $I_{t-1}$ is an identity matrix with as many rows as $W_{t-1}^-$. 
To prove the following bounds, it is sufficient to extend the bounds in Section \ref{sec:expl_cnn_bounds} by using, for each max-pooling layer $g_r$, the quantities $\left(W^-_r\right)^{\rm{abs}}$ and $\Vert W^-_r\Vert$ instead of $W_r^-$.

The following bound holds: 
\[
K^{\rm{impl}} \le K_*^{\rm{impl}} := \prod_{i=0}^{\ell} \Vert W_i^+\Vert.
\]
One also has
\begin{equation}\label{eq:def_cp_conv_impl}
\begin{aligned}
K^{\rm{impl}}\le K_1^{\rm{impl}} = \dfrac{1}{2^{\ell}} \sum_{1\le r_1< r_2<\dots< r_n\le\ell} \Vert W_{\{\ell+1,r_{n}\}}&\Vert\Vert R_{r_n}\Vert\Vert W_{\{r_{n},r_{n-1}\}}\Vert\Vert R_{r_{n-1}}\Vert\dots\\
&\dots\Vert R_{r_2}\Vert
\Vert W_{\{r_2,r_1\}}\Vert  \Vert R_{r_1}\Vert\Vert W_{\{r_1,0\}}\Vert.
\end{aligned}
\end{equation}
One has
\begin{equation}\label{eq:impl_abs}
K^{\rm{impl}} \le K_3^{\rm{impl}} := \left\Vert \prod_{i=0}^{\ell} \left(W_i^+\right)^{\rm{abs}}\right\Vert \le K_*^{\rm{impl}}.
\end{equation}
In view of our general results, the most interesting quantity is 
\[
K_4^{\rm{impl}}:=\dfrac{1}{2^{\ell'}} \sum_{1\le r_1< r_2<\dots< r_n\le\ell} \left\Vert W_{\{\ell+1,r_n\}}^{\rm{abs}} R_{r_n}^{\rm{abs}} W_{\{r_n,r_{n-1}\}}^{\rm{abs}} R_{r_{n-1}}^{\rm{abs}}\dots R_{r_2}^{\rm{abs}} W_{\{r_2,r_1\}}^{\rm{abs}} R_{r_1}^{\rm{abs}}W_{\{r_1,0\}}^{\rm{abs}}\right\Vert.
\]
The following inequalities hold
\[
K^{\rm{impl}} \le K_4^{\rm{impl}} \le K_1^{\rm{impl}}\,\, \mbox{ and }\,\,{K_4^{\rm{impl}} \le K_3^{\rm{impl}}}.
\]

\section{Numerical results}\label{sec:num_res}

We provide and discuss some numerical experiments to illustrate  the theoretical results presented in the previous sections. The numerical results correspond to
 a variety of  very different test problems that come from different origins.
The first test problem is the evaluation of the Lipschitz constant for a fully-connected neural network with three layers with random matrices proposed in  \cite{combettes2020lipschitz}.
The second test problem is particular in the sense that one has a reference solution \cite{yarotsky2017error,daubi,desp:book} for the neural network approximation of
$x\mapsto x^2$. It  allows for sound and simple numerical comparisons, see also some developments in \cite{despi1,despi2}.
The third test problem is a new reference solution  that we propose  for the  function $(x,y)\mapsto xy$.
The fourth problem is the now standard CNN for the  MNIST problem \cite{lecu}. All the numerical tests have been implemented in Tensorflow \cite{tensorflow2015-whitepaper}.

\subsection{Neural networks with random weights}
In this section, we repeat the numerical test with random matrices and weights
proposed in Combettes-Pesquet  \cite[Example 2.1]{combettes2020lipschitz}.
 One  considers a neural network with the following architecture
\begin{equation}\label{eq:test_net}
f = f_3 \circ R \circ f_2 \circ R \circ f_1.
\end{equation}
Here  $R:\R\rightarrow\R$ denotes the ReLU activation function, that is  $R(x)=\max(0,x)$.
The matrices associated with the linear layers are $W_1 \in {\cal M}_{10,8}(\R)$, $W_2 \in {\cal M}_{6,10}(\R)$ and $W_3 \in {\cal M}_{3,6}(\R)$, their entries are i.i.d. realizations of the normal distribution ${\cal{N}}(0,1)$. As in \cite{combettes2020lipschitz}, we initialize 1000 neural networks and analyse the behaviour of the bounds. If the $l^2$ norm is used, the results presented in \cite{combettes2020lipschitz} are obtained (not reproduced here).  
To be coherent with the theory developed in Section \ref{sec:new_bounds} with a more general approach, we prefer to present
the results  computed  with respect to the  $l^\infty$ norm. 

Note that the smaller is a bound, the closer it is to the true Lipschitz constant. The same behaviour holds for the normalized bounds shown in Table \ref{tab:avg_max_stat}, where each bound is divided by $K_*$. In particular, when such ratio is close to 1, it means that the considered bound is not significantly better than the naive bound $K_*$. On the other hand, when it is close to $K/K_*$, it means that the bound is almost as sharp as the ideal (but usually incomputable) bound $K$.

The obtained statistics are reported in Table \ref{tab:avg_max_stat}. Due to the small dimensionality  of the considered network, it is possible to explicitly compute the ideal bound $K$ in \eqref{eq:fcnn_starting_bound}. Coherently with the theory, we observe
 the  inequalities $K\le K_4\le K_3\le K_*$ and $K_4\le K_1\le K_*$. Moreover $K_1$ and $K_3$ are the bounds with the lowest and highest variances (respectively). The smaller value of the ratio $K_\dag/K_*$, $\dag\in\{1,2,3,4\}$, is obtained using the bound $K_2$. 

\begin{table}[htbp]
\footnotesize
\caption{Statistics over 1000 realizations of the network in \eqref{eq:test_net}.}\label{tab:avg_max_stat}
\begin{center}
  \begin{tabular}{|c|c|c|c|c|c|} \hline
   Statistic & $K/K_*$ & $K_1/K_*$ & $K_2/K_*$ & $K_3/K_*$ & $K_4/K_*$ \\ \hline
    Maximum & 0.2772 & 0.5786 & 0.6789 &0.8023  & 0.4608 \\
    Average & 0.1422  & 0.4539 & 0.3256 & 0.5461 & 0.2875  \\
    Minimum & 0.0595 & 0.3703 & 0.1597 & 0.2897 & 0.1604 \\
    Standard deviation & 0.0343 & 0.0350 & 0.0685 & 0.0813 & 0.0483 \\ \hline
  \end{tabular}
\end{center}
\end{table}

\subsection{Approximation of $x^2$}\label{sec:x2}

We present a neural network with an arbitrary number of layers which output is known in a simple closed form over their entire domain
 \cite{yarotsky2017error,daubi,desp:book}. Understanding these types of models, where multiple layers and nonlinearities are present but the typical uncontrolled oscillations of deep neural networks are absent, is crucial in order to improve the deep learning mathematical theory.
Let us consider the one-dimensional hat function $g:[0,1]\rightarrow[0,1]$
\begin{equation*}
g(x) = 
\left\{
\begin{aligned}
&2x, &&\,\,x\in[0,0.5),\\
&2(1-x), &&\,\, x\in[0.5, 1]. 
\end{aligned}
\right.
\end{equation*}
The function $g_r:[0,1]\rightarrow[0,1]$ obtained by composing $r$ times  the function $g$ with itself
\[
g_r(x) = \underbrace{g\circ g\circ \dots \circ g\circ g}_{r{\rm{\text{ times}}}}.
\]
The functions $g_1$, $g_2$ and $g_3$ are represented in Figure \ref{fig:g1_g2_g3}.
\begin{figure}[t!]
\centering 
  \includegraphics[width=0.5\columnwidth,keepaspectratio,clip]{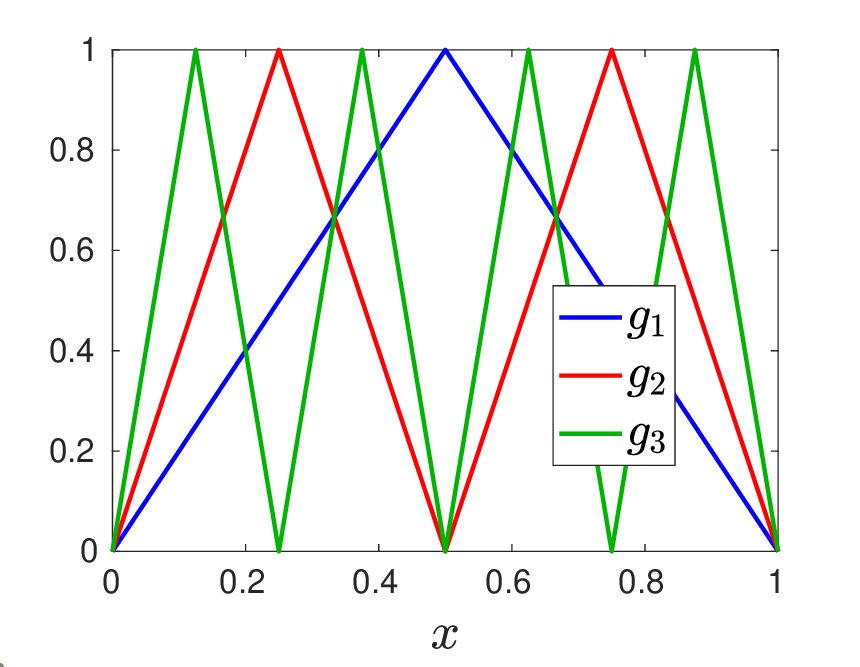} 
  \caption{Graphical representation of $g_1$, $g_2$ and $g_3$.}
  \label{fig:g1_g2_g3}
\end{figure}
The series $\sum_{r=0}^\infty \frac{g_r(x)}{4^r}$ converges to the function $x-x^2$ (two different proves are proposed in \cite{desp:book}). It is then possible to express $x^2$ as
\begin{equation}\label{eq:takagi_series}
x^2 = x - \sum_{r=0}^\infty \frac{g_r(x)}{4^r}.
\end{equation}
Since the function $g$ can be exactly expressed as a small ReLU network and the series on the right hand side of \eqref{eq:takagi_series} converges exponentially fast, it is possible to efficiently approximate the squaring function with a deep ReLU network in the following way.
\begin{itemize}
\item Construct a ReLU network which output coincides with the function $g$. Different choices are available. For example, if we use ReLU networks with two layers and two neurons in the internal one, $g$ can be expressed as
\begin{equation}\label{eq:g_vers1}
g(x) = 1 - R(2x-1) - R(1-2x), \hspace{1cm}  x\in[0,1]
\end{equation}
or
\begin{equation}\label{eq:g_vers2}
g(x) = R(2x) - R(4x-2), \hspace{1cm}  x\in[0,1].
\end{equation}
The function in \eqref{eq:g_vers1} can be represented as a two-layers ReLU neural network $g(x)=f_1\circ R\circ f_0(x)$ with weight matrices and vectors:
\begin{equation}\label{eq:mat_vec_vers1}
W_0 = 
\begin{bmatrix}
    2 \\[0.2cm]
    -2 
\end{bmatrix},\hspace{0.5cm}
b_0 = 
\begin{bmatrix}
    -1 \\[0.2cm]
    1 
\end{bmatrix},\hspace{0.5cm}
W_1 = 
\begin{bmatrix}
    -1, -1
\end{bmatrix},\hspace{0.5cm}
b_1 = 
\begin{bmatrix}
    1
\end{bmatrix}.
\end{equation}
Analogously, the function in \eqref{eq:g_vers2} can be represented by a neural network with the same architecture but with the following weights:
\begin{equation}\label{eq:mat_vec_vers2}
W_0 = 
\begin{bmatrix}
    2 \\[0.2cm]
    4 
\end{bmatrix},\hspace{0.5cm}
b_0 = 
\begin{bmatrix}
    0 \\[0.2cm]
    -2 
\end{bmatrix},\hspace{0.5cm}
W_1 = 
\begin{bmatrix}
    1, -1
\end{bmatrix},\hspace{0.5cm}
b_1 = 
\begin{bmatrix}
    0
\end{bmatrix}.
\end{equation}
\item Concatenate the networks representing the function $g$ and merge each pair of consecutive linear layers into a single new linear layer. For the sake of clarity, we assume that the same representation of the function $g$ (e.g. the one in \eqref{eq:g_vers1} or \eqref{eq:g_vers2}) is adopted on every layer. However such a constraint is not necessary. 
Then the function $g_3$ can be computed as
\begin{equation*}
\begin{aligned}
g_3 &= g\circ g \circ g \\&= (f_1\circ R\circ f_0)\circ(f_1\circ R\circ f_0)\circ(f_1\circ R\circ f_0) \\
&= f_1\circ R\circ f_{0,1}\circ R\circ f_{0,1}\circ R\circ f_0,
\end{aligned}
\end{equation*}
where the new layer $f_{0,1}$ is naturally defined as:
\begin{align*}
f_{0,1}(x) &= f_0\circ f_1(x) \\
&= W_0 (W_1 x+b_1) + b_0\\
&= W_0W_1 x + W_0b_1+b_0\\
&= W_{0,1} x + b_{0,1},
\end{align*}
with $W_{0,1} = W_0W_1$ and $b_{0,1}=W_0b_1+b_0$.

\item Add one neuron to each layer to evaluate and add all the terms in the series in equation \eqref{eq:takagi_series}.
It yields a collation channel \cite{daubi}.
 The specific rows and columns that have to be added to the weight matrices depend on the chosen representation of the function $g$. The resulting neural network associated with the function $x - \sum_{r=0}^3 \frac{g_r(x)}{4^r}$ is represented in Figure \ref{fig:x2_network}.
\end{itemize}

\begin{figure}[t!]
\centering 
  \includegraphics[width=0.5\columnwidth,keepaspectratio,clip]{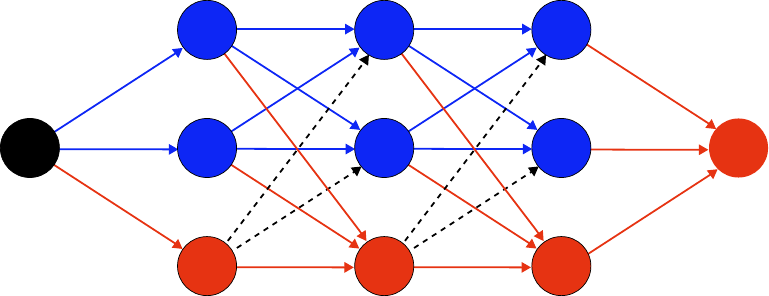} 
  \caption{Graphical representation of the network representing the function $x - \sum_{r=0}^3 \frac{g_r(x)}{4^r}$. The blue dots and edges are associated with the construction of the function $g$. The red ones are used to store and sum $x$ and $-g_i/4^i$, $i=1,2,3$. The weight associated with the dashed lines is 0.}
  \label{fig:x2_network}
\end{figure}

We now evaluate the bounds presented in Section \ref{sec:fcnn} on this specific networks. The obtained bounds are shown in Figure \ref{fig:bounds_per_layers}.  In the left subplot, the function $g$ is represented as in \eqref{eq:g_vers1}. It is possible to observe that $K_*$, $K_1$ and $K_2$ grow exponentially with the number of layers, whereas $K_3$ grows only linearly and $K_4$ is exact, as highlighted in Table \ref{tab:lips_for_x2} for the first networks. In such table, the exact Lipschitz constant $L$ is computable due to the specific network structure. Instead, if the representation in \eqref{eq:g_vers2} is chosen, all bounds grow exponentially, apart from $K_4$ which grows linearly. In both cases, coherently with the theory, the following inequalities hold: $K_4 \le K_1 \le K_*$, $K_4 \le K_3 \le K_*$.

\begin{figure}[t!]
\centering 
\begin{subfigure}[t]{0.45\linewidth}
  \includegraphics[width=\columnwidth,keepaspectratio,clip]{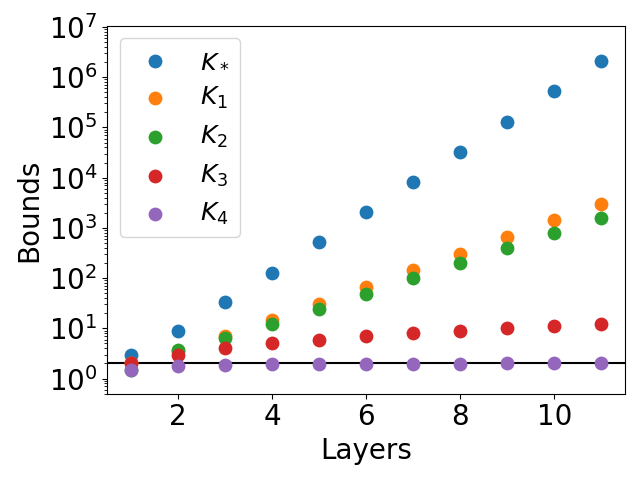} 
  \end{subfigure}
\begin{subfigure}[t]{0.45\linewidth}
  \includegraphics[width=\columnwidth,keepaspectratio,clip]{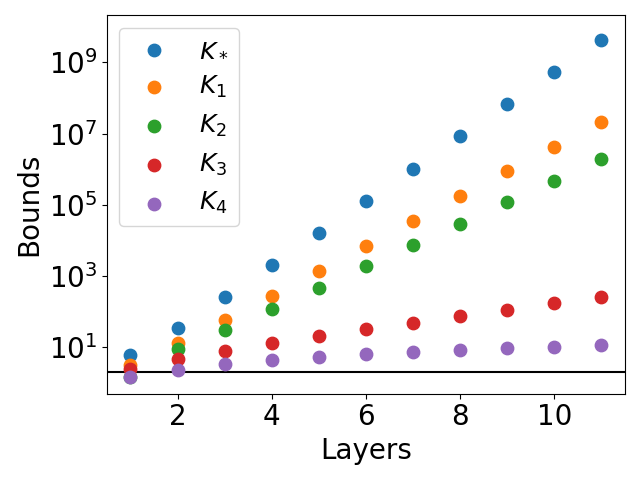} 
  \end{subfigure}
  \caption{Lipschitz bounds for networks approximating the function $x^2$. The function $g$ is represented as in \eqref{eq:g_vers1} (left) or as in \eqref{eq:g_vers2} (right).}
  \label{fig:bounds_per_layers}
\end{figure}

\begin{table}[htbp]
 \caption{Exact Lipshitz constants and upper bounds for networks approximating the function $x^2$. The function $g$ is represented as in \eqref{eq:g_vers1}.}
\begin{center}
\begin{tabular}{|c||c|c|c|c|c|c|}
\hline
$\ell$ & $L$ & $K_*$ & $K_1$ & $K_2$ & $K_3$ & $K_4$ \\ \hline\hline
1 &  1.5 & 3.0 & 2.0 & 1.5 & 2.0 & 1.5 \\ \hline
2 & 1.75 & 9.0 & 3.53125 & 3.64531 & 3.0  & 1.75 \\ \hline
3 &  1.875 & 33.0 & 6.92187 & 6.45925 & 4.0 & 1.875 \\ \hline
4 &  1.93875 & 129.0 & 14.42877 & 12.45882 & 5.0  & 1.93875 \\ \hline 
5 & 1.96875 & 513.0 & 30.75637 & 24.68184 & 6.0 & 1.96875\\ \hline
6 &1.984375  & 2049.0 & 66.00227 & 49.24501 & 7.0 & 1.984375 \\ \hline
\end{tabular}
\end{center}
 \label{tab:lips_for_x2}
\end{table}

Denoting by $K_\dag^\ell$ (for $\dag\in\{*,1,2,3,4\}$) the evaluation of one of the proposed bounds for a network with $\ell$ layers, we define the rate of growth of $K_\dag$ as
$
G_\dag^\ell = \frac{K_\dag^{\ell+2}-K_\dag^{\ell+1}}{K_\dag^{\ell+1}-K_\dag^{\ell}}.
$
This quantity is  useful  to understand the behavior of the growths shown in Fig \ref{fig:bounds_per_layers}. In fact, $K_\dag$ grows exponentially when $G_\dag^\ell$ is constant and larger than 1, $K_\dag$ grows linearly when $G_\dag^\ell=1$, and $K_\dag$ converges exponentially to a constant value when $G_\dag^\ell$ is constant and smaller than 1. The behaviour of $G_\dag^\ell$, coherent with the growths in Figure \ref{fig:bounds_per_layers}, is shown in Figure \ref{fig:rates_per_layers}.
\begin{figure}[t!]
\centering 
\begin{subfigure}[t]{0.45\linewidth}
  \includegraphics[width=\columnwidth,keepaspectratio,clip]{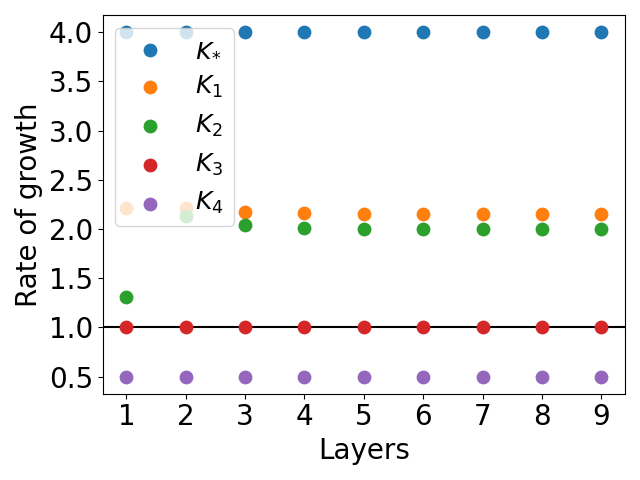} 
  \end{subfigure}
\begin{subfigure}[t]{0.45\linewidth}
  \includegraphics[width=\columnwidth,keepaspectratio,clip]{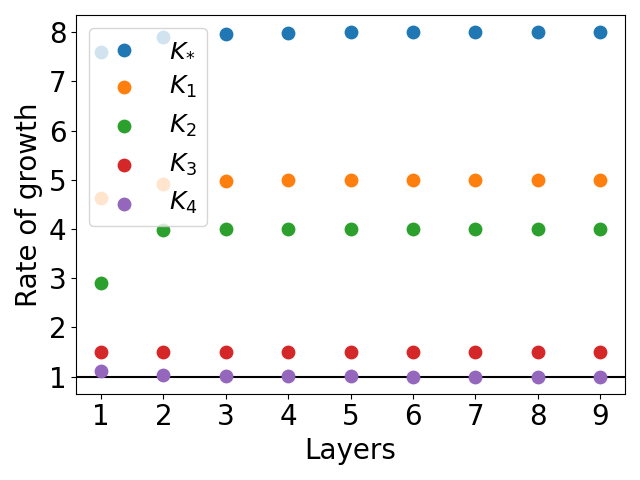} 
  \end{subfigure}
  \caption{Rates of growth for networks approximating the function $x^2$. The function $g$ is represented as in \eqref{eq:g_vers1} (left) or as in \eqref{eq:g_vers2} (right).}
  \label{fig:rates_per_layers}
\end{figure}

\subsection{Approximation of $xy$}\label{sec:xy}

We propose a way to construct a ReLU neural network to approximate a specific polynomial function of two variables.
This method is a completely different alternative to the polarization method of Yarostky for which we  refer to \cite{yarotsky2017error,desp:book}.
To the best of our knowledge, this construction is fully  original with respect to the literature.


Let us consider the two-dimensional domain $[-1,1]^2$ and the mesh $\cal{T}$ shown in Figure \ref{fig:takagi_2d_grid}. We denote by $\varphi_*$, $*\in\{\alpha,\beta,\gamma,\delta,A,B,\dots,H,I\}$ the $\P_1({\cal{T}})$ finite element basis function with value 1 at the node $*$ and 0 at the other ones. Such basis functions are used to construct the function $\Lambda:\R^2\rightarrow\R^2$, $\Lambda=(\Lambda_1, \Lambda_2)$, where
\begin{equation}\label{eq:lambda_def}
\begin{gathered}
\Lambda_1 = \varphi_\alpha - \varphi_\beta + \varphi_\gamma-\varphi_\delta\\
\Lambda_2 = \varphi_D - \varphi_A + \varphi_B - \varphi_C + \varphi_F - \varphi_E - \varphi_G + \varphi_H - \varphi_I,
\end{gathered}
\end{equation}
Denoting by $T$ the target function $T(x,y)=xy$, we define the function $e_0$ as $e_0=T-\frac14T\circ\Lambda$.

\begin{figure}[t!]
\centering 
  \includegraphics[width=0.4\columnwidth,keepaspectratio,clip]{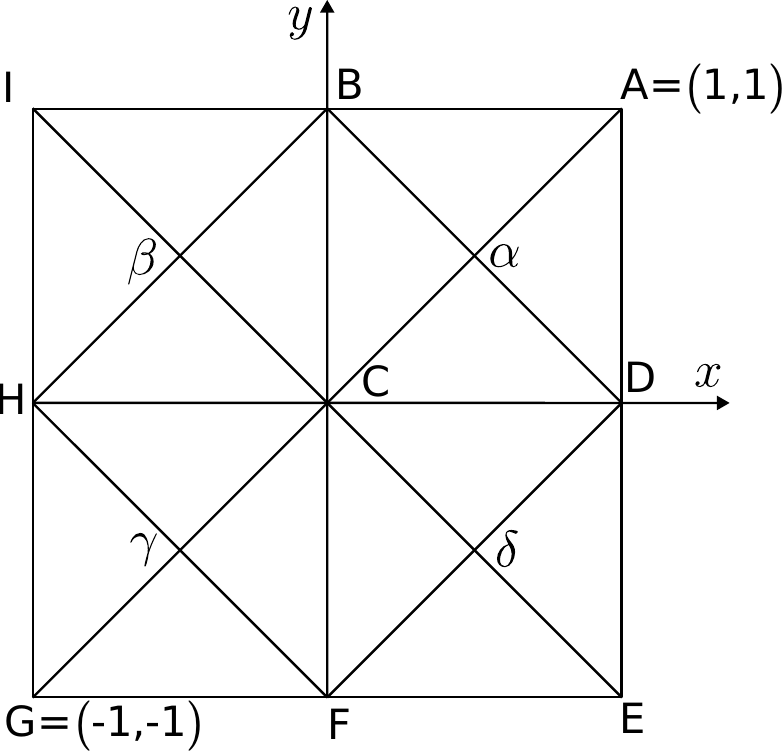} 
  \caption{Mesh used to construct the neural network approximating the function $xy$ on $[-1,1]^2$.}
  \label{fig:takagi_2d_grid}
\end{figure}

\begin{proposition}
The function $e_0$ is linear in each triangle of the mesh $\cal{T}$, i.e. $e_0\in\P_1({\cal{T}})$, and it interpolates $T$ in each node of the mesh $\cal{T}$.
\end{proposition}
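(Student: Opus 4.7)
The plan is to prove the two assertions separately: nodal interpolation at every vertex of $\mathcal{T}$, and affineness on every triangle $K\in\mathcal{T}$. The first is an immediate algebraic observation; the second reduces to a finite combinatorial check that is arranged precisely by the choice of signs in \eqref{eq:lambda_def}.

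First I would dispose of nodal interpolation. The index sets appearing in the definitions of $\Lambda_1$ and $\Lambda_2$ are $\{\alpha,\beta,\gamma,\delta\}$ and $\{A,B,C,D,E,F,G,H,I\}$ respectively; by the naming convention these two sets are disjoint, and at any remaining mesh node all the hat functions in both sums vanish. Consequently, at every vertex $P\in\mathcal{T}$, at least one of $\Lambda_1(P)$ and $\Lambda_2(P)$ is zero, so $T(\Lambda(P))=\Lambda_1(P)\Lambda_2(P)=0$, and therefore $e_0(P)=T(P)-\tfrac{1}{4}T(\Lambda(P))=T(P)$.

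Next I would turn to the affineness on each triangle. On a fixed $K\in\mathcal{T}$, both $\Lambda_1|_K$ and $\Lambda_2|_K$ are affine, say $\Lambda_i|_K(x,y)=a_i x+b_i y+c_i$. Expanding the product gives
\[
T\circ\Lambda|_K \;=\; a_1 a_2\,x^2 \;+\; (a_1 b_2+a_2 b_1)\,xy \;+\; b_1 b_2\,y^2 \;+\; \ell(x,y),
\]
with $\ell$ affine. Since $T(x,y)=xy$, the function $e_0|_K=T|_K-\tfrac14 T\circ\Lambda|_K$ lies in $\P_1(K)$ if and only if the quadratic part cancels, i.e.\
\[
a_1 a_2 = 0, \qquad b_1 b_2 = 0, \qquad a_1 b_2 + a_2 b_1 = 4.
\]
Geometrically this means that on every triangle $K$ one of $\Lambda_1|_K,\Lambda_2|_K$ depends only on $x$ and the other only on $y$, and their nontrivial slopes multiply to~$4$. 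The mesh in Figure \ref{fig:takagi_2d_grid} together with the signs in \eqref{eq:lambda_def} are designed precisely so as to enforce this; in particular, $\Lambda_1$ looks like a signed triangle-wise affine function of $x$ alone, and $\Lambda_2$ like one of $y$ alone.

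The main obstacle is therefore the verification of this slope identity over all triangles. I would handle it by exploiting the $(x,y)\mapsto(\pm x,\pm y)$ symmetry of the mesh and of the signed combinations defining $\Lambda_1,\Lambda_2$, which collapses the enumeration to a handful of representative triangles whose vertices and incident hat functions can be read off directly from the figure. Once these are checked, continuity of $e_0$ combined with nodal interpolation and triangle-wise affineness identifies $e_0$ with the $\P_1(\mathcal{T})$ Lagrange interpolant of $T$, which is the content of the proposition.
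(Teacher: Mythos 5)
Your proof is correct, and it is organized differently from the paper's. The paper proves both claims at once by brute force on a representative triangle: it restricts to $T_{(\alpha,C,D)}$, computes $\Lambda|_{T_{(\alpha,C,D)}}=(2y,2x-1)$, evaluates $e_0=y/2$ explicitly (hence affine), checks equality with $T$ at the three vertices, and asserts that the same computation repeats on every triangle. You instead decouple the two assertions. Your nodal-interpolation argument is global and structural rather than computational: since $\varphi_*(P)=\delta_{*P}$ and the index sets of $\Lambda_1$ and $\Lambda_2$ are disjoint, $\Lambda_1(P)\Lambda_2(P)=0$ at every node, so $T(\Lambda(P))=0$ and $e_0(P)=T(P)$ with no case analysis at all --- this is cleaner and more illuminating than the paper's pointwise check. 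For affineness you extract the exact algebraic criterion ($a_1a_2=0$, $b_1b_2=0$, $a_1b_2+a_2b_1=4$) that the slopes of $\Lambda|_K$ must satisfy, which makes transparent \emph{why} the construction works, whereas the paper simply exhibits the cancellation on one triangle; but both arguments ultimately defer the same triangle-by-triangle enumeration (the paper by ``repeating the same computation,'' you by a symmetry reduction), so neither is more complete on that point. One small slip: on the paper's representative triangle it is $\Lambda_1$ that depends only on $y$ and $\Lambda_2$ only on $x$, the opposite of what your final ``in particular'' sentence asserts; this is immaterial since your criterion is symmetric in the two components, but you should either swap the roles or drop the specific attribution.
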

\begin{proof}
Let us consider the triangle $T_{(\alpha,C,D)}$ of vertices $\alpha$, $C$ and $D$. Due to the local nature of the finite element basis functions, only the basis functions $\varphi_\alpha$, $\varphi_C$ and $\varphi_D$ are non-zero on $T_{(\alpha,C,D)}$. Therefore, $\Lambda$ can be computed as $\Lambda=(\varphi_\alpha, \varphi_D-\varphi_C)$. In particular, $\Lambda_{|T_{(\alpha,C,D)}} = (2y, 2x-1)$. A direct evaluation of the function $e_0$ in the triangle $T_{(\alpha,C,D)}$ shows that
\begin{equation*}
e_0(x,y) = T(x,y) - \frac14T\circ\Lambda(x,y)= xy - \frac14 T(2y, 2x-1) 
= xy - \frac{(2y)(2x-1)}{4}
= xy - \frac{4xy - 2y}{4}
= \frac{y}{2}.
\end{equation*}
Moreover, we observe that:
$
e_0(0,0) = 0 = T(0,0)$, $e_0\left(\frac12,\frac12\right) = \frac14 = T\left(\frac12,\frac12\right)$
and $e_0(1,0) = 0 = T(1,0)$, 
i.e. $e_0$ and $T$ coincide on the vertices of the triangle.
Repeating the same computation in each triangle, it can be shown that $e_0$ is linear in each triangle and $e_0=T$ in each node of $\cal{T}$.
\end{proof}
From the definition of $e_0$, we can express $T$ as
\begin{equation}\label{eq:xy_net_def}
\begin{aligned}
T &= e_0 + \frac14T\circ\Lambda 
= e_0 + \frac14\left(e_0 + \frac14T\circ\Lambda\right)\circ\Lambda \\
&= e_0 + \frac14 e_0\circ\Lambda + \frac{1}{4^2}T\circ\Lambda\circ\Lambda
= e_0 + \frac14 e_0\circ\Lambda + \frac{1}{4^2}\left(e_0 + \frac14T\circ\Lambda\right)\circ\Lambda\circ\Lambda\\
&= e_0 + \frac14 e_0\circ\Lambda + \frac{1}{4^2}e_0\circ\Lambda\circ\Lambda + \frac{1}{4^3}T\circ\Lambda\circ\Lambda\circ\Lambda
= \sum_{r=0}^\infty \frac{1}{4^r} e_0 \circ\underbrace{\Lambda\circ\Lambda\circ\dots\circ\Lambda\circ\Lambda}_{r{\rm{\text{ times}}}}
=\sum_{r=0}^\infty \frac{g_r}{4^r},
\end{aligned}
\end{equation}
where
\begin{equation}\label{eq:xy_gr_def}
g_r = e_0 \circ\underbrace{\Lambda\circ\Lambda\circ\dots\circ\Lambda\circ\Lambda}_{r{\rm{\text{ times}}}}.
\end{equation}
The derived expression is similar to \eqref{eq:takagi_series}, i.e. the target function is expressed as an infinite series of terms that are combinations of piecewise linear functions. Therefore, if one is able to exactly represent the functions $e_0$ and $\Lambda$ as ReLU neural networks, then it is possible to efficiently represent the target function $T$.

Let us consider the reference square $[-1,1]^2$ and let us subdivide it in four triangles by joining the vertices with the origin. Let $\hat\varphi$ be the piece-wise linear function with value 1 at the origin and 0 at the vertices and which is linear in each triangle. This is our reference basis function and can be constructed, for example, as:
\begin{equation}\label{eq:ref_phi_1}
\hat\varphi(x,y) = R\left[2-R(x+y)-R(x-y)-R(1-x)\right],
\end{equation}
or
\begin{equation}\label{eq:ref_phi_2}
\hat\varphi(x,y) = R\left[1-R\left(\frac x2+\frac y2\right)-R\left(\frac x2-\frac y2\right)-R\left(-\frac x2+\frac y2\right)-R\left(-\frac x2-\frac y2\right)\right].
\end{equation}
Such functions can be expressed as ReLU neural networks with architecture $\widetilde f_2\circ R\circ\widetilde f_1\circ R\circ\widetilde f_0$. We point out that adding the external operator $R$ (and thus the linear layer $\widetilde f_2$) is not necessary when considering $\hat\varphi$ on the reference square $[-1,1]^2$, but it is important in the neural network construction to obtain local basis functions. Using suitable linear change of variables $\cal{L}_*$, we construct the local basis function $\varphi_*$, $*\in\{\alpha,\beta,\gamma,\delta,B,C,D,F,H\}$ as $\varphi_*=\hat\varphi\circ{\cal{L}}_*$. Denoting by $\widetilde f_0^*$ the new linear layer defined as $\widetilde f_0^*=\widetilde f_0\circ {\cal{L}}_*$, each basis function $\phi_*$ can be exactly computed as $\widetilde f_2\circ R\circ\widetilde f_1\circ R\circ\widetilde f_0^*$. Lastly, we compute $\varphi_A$, $\varphi_E$, $\varphi_G$ and $\varphi_I$ as
\[
\varphi_A(x,y) = R(R(x+y-1)), \hspace{1cm} \varphi_E(x,y) = R(R(x-y+1)),
\]
\[
\varphi_G(x,y) = R(R(-x-y-1)), \hspace{1cm} \varphi_I(x,y) = R(R(-x+y+1)),
\]
where the double application of the ReLU operator is needed to obtain neural networks as deep as the previous ones.

Let $f_\varphi = f_2\circ R\circ f_1\circ R\circ f_0$ be the ReLU neural network with 2 inputs and 13 outputs obtained combining the neural networks of the 13 basis functions $\varphi_*$, $*\in\{\alpha,\beta,\gamma,\delta,A,B,\dots,H,I\}$. Let $f_3:\R^{13}\rightarrow\R^2$ be the linear operator mapping the vector $[\varphi_\alpha,\varphi_\beta,\varphi_\gamma,\varphi_\delta,\varphi_A,\varphi_B,\dots,\varphi_H,\varphi_I]$ to the vector $[\Lambda_1,\Lambda_2]$ according to \eqref{eq:lambda_def} and let $f_{3,2}$ be the linear operator $f_3\circ f_2$. A ReLU neural network exactly representing the operator $\Lambda$ is thus $f_\Lambda = f_{3,2}\circ R\circ f_1\circ R\circ f_0$. Analogously, let $f_4:\R^{13}\rightarrow\R$ be the linear operator mapping the vector $[\varphi_\alpha,\varphi_\beta,\varphi_\gamma,\varphi_\delta,\varphi_A,\varphi_B,\dots,\varphi_H,\varphi_I]$ to the function $e_0$ and let $f_{4,2}$ be the linear operator $f_4\circ f_2$. Then, a ReLU neural network exactly representing the function $e_0$ is $f_{e_0} = f_{4,2}\circ R\circ f_1\circ R\circ f_0$.

Substituting the derived neural networks in equation \eqref{eq:xy_gr_def}, $g_r$ can be thus exactly represented as:
\begin{equation}
\begin{aligned}
g_r &= e_0 \circ\underbrace{\Lambda\circ\Lambda\circ\dots\circ\Lambda\circ\Lambda}_{r{\rm{\text{ times}}}} 
= f_{e_0} \circ\underbrace{f_\Lambda\circ f_\Lambda\circ\dots\circ f_\Lambda\circ f_\Lambda}_{r{\rm{\text{ times}}}} \\
&= \left(f_{4,2}\circ R\circ f_1\circ R\circ f_0\right) \circ \underbrace{\left(f_{3,2}\circ R\circ f_1\circ R\circ f_0\right)\circ \dots \circ \left(f_{3,2}\circ R\circ f_1\circ R\circ f_0\right)}_{r{\rm{\text{ times}}}}.
\end{aligned}
\end{equation}
As done previously, we merge the consecutive linear layers $f_0$ and $f_{3,2}$ into a new linear layer $f_{0,3,2}$. We denote the resulting ReLU neural network by $f_{g_r}$. Finally, as in Section \ref{sec:x2}, the target function $T$ is approximated by truncating the series \eqref{eq:xy_net_def} and by enlarging the resulting neural network to store and sum all the intermediate terms involved in such a series.
Adding a single neuron to each layer in Section \ref{sec:x2} was enough because the target function and the approximation were always positive. Instead, since in this new case they can be both positive and negative in the domain, we need to add two neurons to propagate the information by representing the identity operator $I$ as $I(x) = R(x) - R(-x)$.

We are now able to test the bounds discussed in Section \ref{sec:fcnn} on this new benchmark neural network. In Figure \ref{fig:bounds_per_layers_xy}, such bounds are evaluated on neural networks of different length and constructed using the representations \eqref{eq:ref_phi_1} and \eqref{eq:ref_phi_2}. We point out that, when \eqref{eq:ref_phi_1} is used, the resulting neural network is a series of alternating layers of width 33 and 15, whereas when \eqref{eq:ref_phi_2} is used, the widths of the layers are 42 and 15. In both cases, it is possible to observe that all bounds grow exponentially with respect to the number of layers and that, as expect from the theory, $K_4 \le K_1 \le K_*$ and $K_4 \le K_3 \le K_*$. Note that it is not possible to evaluate the $K_2$ bound because the layers are already too large.

In Figure \ref{fig:rates_per_layers_xy} we show the rates of growths of the bounds shown in Figure \ref{fig:bounds_per_layers_xy}. Coherently with the exponential growth of such bounds, all the rates of growths are positive and constant. We also highlight that the bounds and the rates of growth are smaller when \eqref{eq:ref_phi_2} is used, even if it leads to a larger network. The improved performance is probably due to the symmetry of such a representation. Indeed, in Section \ref{sec:x2}, we observed that the bounds are sharper and the rates of growth smaller when a symmetric representation of the function $g$ is chosen.

\begin{figure}[t!]
\centering 
\begin{subfigure}[t]{0.45\linewidth}
  \includegraphics[width=\columnwidth,keepaspectratio,clip]{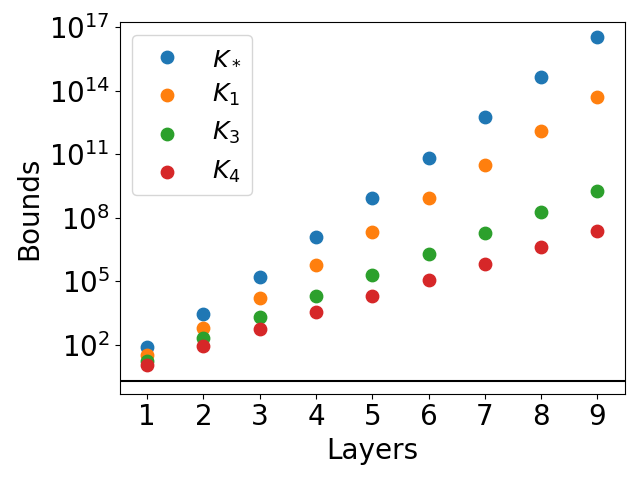} 
  \end{subfigure}
\begin{subfigure}[t]{0.45\linewidth}
  \includegraphics[width=\columnwidth,keepaspectratio,clip]{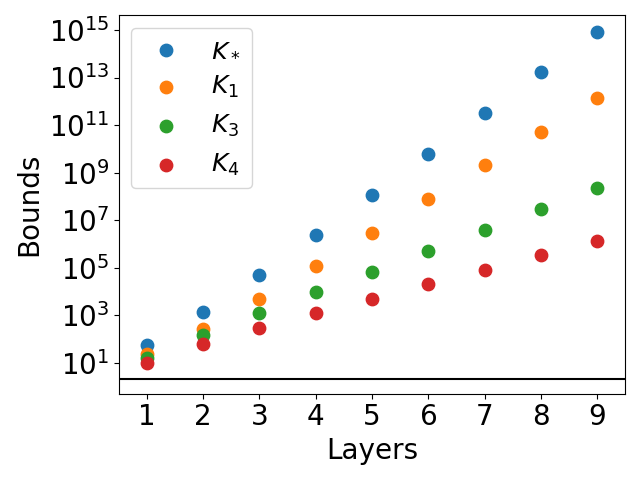} 
  \end{subfigure}
  \caption{Lipschitz bounds for networks approximating the function $xy$. The function $\hat\varphi$ is represented as in \eqref{eq:ref_phi_1} (left) or as in \eqref{eq:ref_phi_2} (right).}
  \label{fig:bounds_per_layers_xy}
\end{figure}

\begin{figure}[t!]
\centering 
\begin{subfigure}[t]{0.45\linewidth}
  \includegraphics[width=\columnwidth,keepaspectratio,clip]{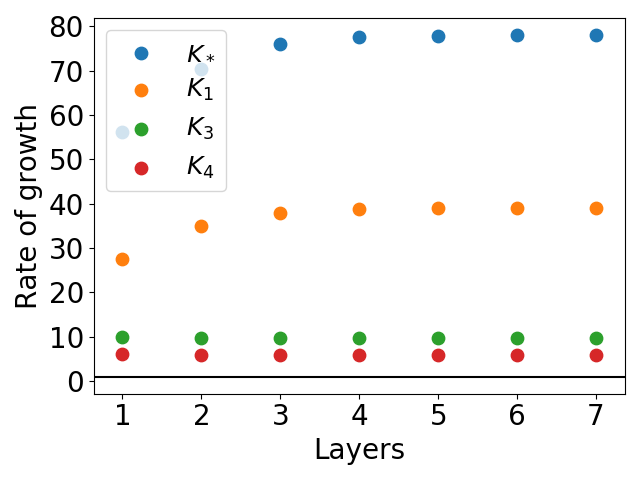} 
  \end{subfigure}
\begin{subfigure}[t]{0.45\linewidth}
  \includegraphics[width=\columnwidth,keepaspectratio,clip]{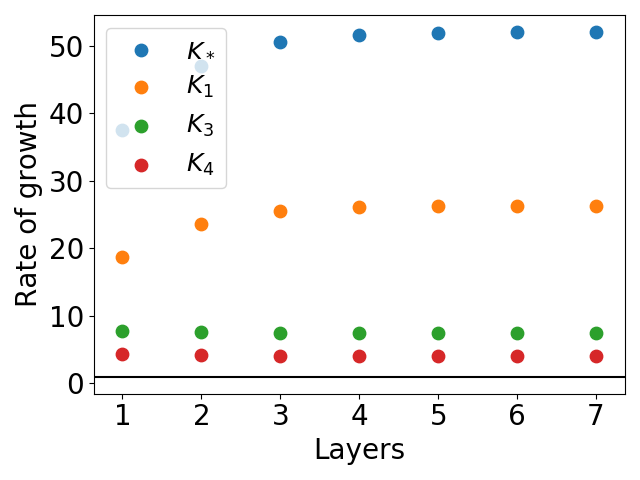} 
  \end{subfigure}
  \caption{Rates of growth for networks approximating the function $xy$. The function $\hat\varphi$ is represented as in \eqref{eq:ref_phi_1} (left) or as in \eqref{eq:ref_phi_2} (right).}
  \label{fig:rates_per_layers_xy}
\end{figure}

\subsection{Convolutional neural networks}
In this section, we test the bounds proposed in Section \ref{sec:conv} on convolutional neural networks trained on the MNIST dataset. To compare the performance of the discussed bounds we train three different convolutional neural networks with the following architectures:
\begin{itemize}
\item Model A: Convolutional layer (5 filters) - Average-pooling layer - Convolutional layer (10 filters) - Max-pooling layer - Dense layer (20 neurons) - Dense layer (10 neurons);
\item Model B: Convolutional layer (5 filters) - Max-pooling layer - Convolutional layer (10 filters) - Max-pooling layer - Dense layer (20 neurons) - Dense layer (10 neurons);
\item Model C: Convolutional layer (10 filters) - Max-pooling layer - Convolutional layer (20 filters) - Max-pooling layer - Dense layer (40 neurons) - Dense layer (10 neurons).
\end{itemize}
Note that from Model A to Model B we only change the first pooling layer from an \textit{Average-pooling layer} to a \textit{Max-pooling layer}, whereas from Model B to Model C we only double the number of filters and neurons in the inner layers.

We adopt the ReLU activation function everywhere, apart from the last layer where we use the softmax activation function. We always use a glorot normal initialization and we train the networks with the ADAM optimizer \cite{kingma2014adam}, with standard values of the hyperparameters, for 100 epochs. Each network is trained three times with different $l^2$ regularization parameters $\lambda_{\rm{reg}}$ and we evaluate all the available bounds. The numerical results are shown in Tables \ref{tab:conv_res_no_reg}, \ref{tab:conv_res_reg1e-3} and \ref{tab:conv_res_reg1e-2} for $\lambda_{\rm{reg}}=0$, $\lambda_{\rm{reg}}=1e-3$ and $\lambda_{\rm{reg}}=1e-2$ respectively.

Coherently with the theory presented in Section \ref{sec:conv}, the inequalities $K_4 \le K_1 \le K_*$ and $K_4 \le K_3 \le K_*$ hold for both the implicit and the explicit approach. The numerical results show that, in the considered cases, the implicit approach bounds are all sharper than the ones obtained with the explicit approach, both in the $l^1$ norm and in the $l^\infty$ norm. For example, note that the value of $K_4$ computed with the explicit approach is always greater than the value of $K_*$ compute with the implicit one. Lastly, we highlight that higher values of $\lambda_{\rm{reg}}$ can be used to obtain neural networks with lower Lipschitz bounds. 

\begin{table}[htbp]
\footnotesize
 \caption{Lipschitz bounds on convolutional neural networks trained without regularization. The accuracy on the test set is always between 97\% and 98\%.}
\begin{center}
\begin{tabular}{|c|c||cccc||cccc|}
\hline
\multirow{2}{*}{Model} & \multirow{2}{*}{Approach} & \multicolumn{4}{c||}{$l^1$ \vphantom{$M^{M^{M^a}}$}}                                                                   & \multicolumn{4}{c|}{$l^\infty$}                                                                               \\ \cline{3-10} 
                       &                           & \multicolumn{1}{c|}{$K_*$ \vphantom{$M^{M^{M^a}}$}} & \multicolumn{1}{c|}{$K_1$} & \multicolumn{1}{c|}{$K_3$} & $K_4$ & \multicolumn{1}{c|}{$K_*$}   & \multicolumn{1}{c|}{$K_1$}   & \multicolumn{1}{c|}{$K_3$}   & $K_4$            \\ \hline\hline
\multirow{2}{*}{Model A}               & Explicit                  &\multicolumn{1}{c|}{1.777e6} & \multicolumn{1}{c|}{1.347e5} & \multicolumn{1}{c|}{2.362e5} & {3.119e4} & \multicolumn{1}{c|}{2.836e7} & \multicolumn{1}{c|}{2.381e6} & \multicolumn{1}{c|}{8.320e6} & \multicolumn{1}{c|}{9.236e5}      \\ 
              & Implicit                  & \multicolumn{1}{c|}{2.172e3} & \multicolumn{1}{c|}{4.061e2} & \multicolumn{1}{c|}{8.016e2} & \textbf{2.349e2} & \multicolumn{1}{c|}{7.602e4} & \multicolumn{1}{c|}{1.268e4} & \multicolumn{1}{c|}{3.364e4} & \multicolumn{1}{c|}{\textbf{8.197e3}}   \\ \hline

\multirow{2}{*}{Model B}               & Explicit                  &\multicolumn{1}{c|}{3.567e8} & \multicolumn{1}{c|}{9.815e6} & \multicolumn{1}{c|}{7.780e6} & {6.364e5} & \multicolumn{1}{c|}{9.811e8} & \multicolumn{1}{c|}{6.063e7} & \multicolumn{1}{c|}{2.584e8} & \multicolumn{1}{c|}{1.752e7}     \\ 
                & Implicit                  & \multicolumn{1}{c|}{8.916e3} & \multicolumn{1}{c|}{1.759e3} & \multicolumn{1}{c|}{3.190e3} & \textbf{9.946e2} & \multicolumn{1}{c|}{3.925e5} & \multicolumn{1}{c|}{6.421e4} & \multicolumn{1}{c|}{1.286e5} & \multicolumn{1}{c|}{\textbf{3.328e4}}   \\ \hline

\multirow{2}{*}{Model C}                & Explicit                 & \multicolumn{1}{c|}{8.630e9} & \multicolumn{1}{c|}{2.179e8} & \multicolumn{1}{c|}{1.381e8} & {1.176e7} & \multicolumn{1}{c|}{2.555e10} & \multicolumn{1}{c|}{1.478e9} & \multicolumn{1}{c|}{4.704e9} & \multicolumn{1}{c|}{2.861e8}         \\ 
                & Implicit                  &\multicolumn{1}{c|}{1.348e4} & \multicolumn{1}{c|}{2.488e3} & \multicolumn{1}{c|}{3.790e3} & \textbf{1.216e3} & \multicolumn{1}{c|}{6.388e5} & \multicolumn{1}{c|}{1.007e5} & \multicolumn{1}{c|}{1.602e5} & \multicolumn{1}{c|}{\textbf{3.832e4}}  \\ \hline

\end{tabular}
\end{center}
 \label{tab:conv_res_no_reg}
\end{table}

\begin{table}[htbp]
\footnotesize
 \caption{Lipschitz bounds on convolutional neural networks trained with $l^2$ regularization with parameter $\lambda_{\rm{reg}}=1e-3$. The accuracy on the test set is always between 97\% and 98\%.}
\begin{center}
\begin{tabular}{|c|c||cccc||cccc|}
\hline
\multirow{2}{*}{Model} & \multirow{2}{*}{Approach} & \multicolumn{4}{c||}{$l^1$ \vphantom{$M^{M^{M^a}}$}}                                                                   & \multicolumn{4}{c|}{$l^\infty$}                                                                               \\ \cline{3-10} 
                       &                           & \multicolumn{1}{c|}{$K_*$ \vphantom{$M^{M^{M^a}}$}} & \multicolumn{1}{c|}{$K_1$} & \multicolumn{1}{c|}{$K_3$} & $K_4$ & \multicolumn{1}{c|}{$K_*$}   & \multicolumn{1}{c|}{$K_1$}   & \multicolumn{1}{c|}{$K_3$}   & $K_4$            \\ \hline\hline
\multirow{2}{*}{Model A}               & Explicit                  &\multicolumn{1}{c|}{4.753e5} & \multicolumn{1}{c|}{4.102e4} & \multicolumn{1}{c|}{5.699e4} & {9.077e3} & \multicolumn{1}{c|}{9.618e6} & \multicolumn{1}{c|}{8.795e5} & \multicolumn{1}{c|}{1.902e6} & \multicolumn{1}{c|}{2.333e5}       \\ 
              & Implicit                  & \multicolumn{1}{c|}{7.542e2} & \multicolumn{1}{c|}{1.638e2} & \multicolumn{1}{c|}{2.552e2} & \textbf{9.340e1} & \multicolumn{1}{c|}{2.428e4} & \multicolumn{1}{c|}{4.569e3} & \multicolumn{1}{c|}{9.621e3} & \multicolumn{1}{c|}{\textbf{2.777e3}}   \\ \hline

\multirow{2}{*}{Model B}               & Explicit                  & \multicolumn{1}{c|}{1.074e8} & \multicolumn{1}{c|}{3.996e6} & \multicolumn{1}{c|}{1.996e6} & {2.608e5} & \multicolumn{1}{c|}{2.102e8} & \multicolumn{1}{c|}{2.177e7} & \multicolumn{1}{c|}{6.191e7} & \multicolumn{1}{c|}{6.672e6}      \\ 
                & Implicit                  & \multicolumn{1}{c|}{2.686e3} & \multicolumn{1}{c|}{6.127e2} & \multicolumn{1}{c|}{9.377e2} & \textbf{3.614e2} & \multicolumn{1}{c|}{8.409e4} & \multicolumn{1}{c|}{1.651e4} & \multicolumn{1}{c|}{3.592e4} & \multicolumn{1}{c|}{\textbf{1.089e4}}  \\ \hline

\multirow{2}{*}{Model C}                & Explicit                 & \multicolumn{1}{c|}{2.839e9} & \multicolumn{1}{c|}{7.436e7} & \multicolumn{1}{c|}{3.108e7} & {3.226e6} & \multicolumn{1}{c|}{5.999e9} & \multicolumn{1}{c|}{3.607e8} & \multicolumn{1}{c|}{1.007e9} & \multicolumn{1}{c|}{6.863e7}         \\ 
                & Implicit                  & \multicolumn{1}{c|}{4.436e3} & \multicolumn{1}{c|}{9.379e2} & \multicolumn{1}{c|}{1.014e3} & \textbf{4.224e2} & \multicolumn{1}{c|}{1.500e5} & \multicolumn{1}{c|}{2.482e4} & \multicolumn{1}{c|}{4.233e4} & \multicolumn{1}{c|}{\textbf{1.131e4}}   \\ \hline

\end{tabular}
\end{center}
 \label{tab:conv_res_reg1e-3}
\end{table}

\begin{table}[htbp]
\footnotesize
 \caption{Lipschitz bounds on convolutional neural networks trained with $l^2$ regularization with parameter $\lambda_{\rm{reg}}=1e-2$. The accuracy on the test set is always between 95\% and 97\%.}
\begin{center}
\begin{tabular}{|c|c||cccc||cccc|}
\hline
\multirow{2}{*}{Model} & \multirow{2}{*}{Approach} & \multicolumn{4}{c||}{$l^1$ \vphantom{$M^{M^{M^a}}$}}                                                                   & \multicolumn{4}{c|}{$l^\infty$}                                                                               \\ \cline{3-10} 
                       &                           & \multicolumn{1}{c|}{$K_*$ \vphantom{$M^{M^{M^a}}$}} & \multicolumn{1}{c|}{$K_1$} & \multicolumn{1}{c|}{$K_3$} & $K_4$ & \multicolumn{1}{c|}{$K_*$}   & \multicolumn{1}{c|}{$K_1$}   & \multicolumn{1}{c|}{$K_3$}   & $K_4$            \\ \hline\hline
\multirow{2}{*}{Model A}               & Explicit                  & \multicolumn{1}{c|}{1.058e5} & \multicolumn{1}{c|}{1.107e4} & \multicolumn{1}{c|}{9.377e3} & {2.222e3} & \multicolumn{1}{c|}{1.565e6} & \multicolumn{1}{c|}{1.734e5} & \multicolumn{1}{c|}{2.993e5} & \multicolumn{1}{c|}{4.756e4}       \\ 
              & Implicit                  & \multicolumn{1}{c|}{1.562e2} & \multicolumn{1}{c|}{4.721e1} & \multicolumn{1}{c|}{5.191e1} & \textbf{2.739e1} & \multicolumn{1}{c|}{3.678e3} & \multicolumn{1}{c|}{9.021e2} & \multicolumn{1}{c|}{1.859e3} & \multicolumn{1}{c|}{\textbf{6.424e2}}   \\ \hline

\multirow{2}{*}{Model B}               & Explicit                  &\multicolumn{1}{c|}{1.712e7} & \multicolumn{1}{c|}{7.353e5} & \multicolumn{1}{c|}{1.599e5} & {3.626e4} & \multicolumn{1}{c|}{2.878e7} & \multicolumn{1}{c|}{3.171e6} & \multicolumn{1}{c|}{4.590e6} & \multicolumn{1}{c|}{6.804e5}        \\ 
                & Implicit                  &\multicolumn{1}{c|}{4.280e2} & \multicolumn{1}{c|}{1.423e2} & \multicolumn{1}{c|}{1.772e2} & \textbf{1.016e2} & \multicolumn{1}{c|}{1.151e4} & \multicolumn{1}{c|}{2.823e3} & \multicolumn{1}{c|}{6.203e3} & \multicolumn{1}{c|}{\textbf{2.294e3}}  \\ \hline

\multirow{2}{*}{Model C}                & Explicit                 & \multicolumn{1}{c|}{4.950e8} & \multicolumn{1}{c|}{1.910e7} & \multicolumn{1}{c|}{3.120e6} & {7.327e5} & \multicolumn{1}{c|}{8.241e8} & \multicolumn{1}{c|}{7.749e7} & \multicolumn{1}{c|}{9.068e7} & \multicolumn{1}{c|}{1.170e7}         \\ 
                & Implicit                  & \multicolumn{1}{c|}{7.735e2} & \multicolumn{1}{c|}{2.337e2} & \multicolumn{1}{c|}{1.853e2} & \textbf{1.230e2} & \multicolumn{1}{c|}{2.060e4} & \multicolumn{1}{c|}{4.570e3} & \multicolumn{1}{c|}{6.574e3} & \multicolumn{1}{c|}{\textbf{2.307e3}}  \\ \hline

\end{tabular}
\end{center}
 \label{tab:conv_res_reg1e-2}
\end{table}

\section{Conclusion}\label{sec:conclusion}
In this work we analysed and compared various upper bounds of the Lipschitz constant of deep neural networks. We considered five different bounds: the naive one ($K_*$), the ones proposed in \cite{combettes2020lipschitz} and in \cite{virmaux2019lipschitz} ($K_1$ and $K_2$, respectively), and two novel bounds ($K_3$ and $K_4$). For fully-connected feed-forward neural networks, $K_*$, $K_1$ and $K_2$ are upper bounds in any $l^p$ matrix norm. However, we mainly focused on the $l^1$ and $l^\infty$ norms to provide new bounds and prove that they are sharper than the existing ones with the same computational cost, i.e. $K_3\le K_*$ and $K_4\le K_1$.

Two types of generalizations of such bounds to convolutional neural networks involving convolutional layers, linear layers, max-pooling layers and average-pooling layers are presented. The theoretical inequalities obtained for fully-connected feed-forward neural networks are also extended to prove that the generalization of $K_4$ is still the sharper bound among the available ones.

We provide numerical results on fully-connected feed-forward neural networks, with random weights or approximating specific polynomials, and on convolutional neural networks with different architectures. We also proposed two ways to construct neural networks converging to the functions $x\rightarrow x^2$ and $(x,y)\rightarrow xy$ exponentially with respect to the number of layers. This is important in order to improve the theoretical understanding of deep neural networks in a simplified scenarios, where multiple layers are present but the represented function is still known in closed form.

Future extension of this works include a theoretical analysis in other norms, the development of training strategies involving the proposed bounds to obtain more stable neural networks, and the generalization of the networks converging to $x\rightarrow x^2$ and $(x,y)\rightarrow xy$ to more general functions. 

\section*{Acknowledgements}
This work is funded by PEPER/IA.

\bibliographystyle{siam}
\bibliography{bibliography}
\end{document}